\newcommand{\add}[1]{{\color{blue}#1}}
\newlength\paramargin
\newlength\abovetabcapmargin
\newlength\belowtabcapmargin
\newlength\abovefigcapmargin
\newlength\belowfigcapmargin
\newlength\aboveeqmargin
\newlength\beloweqmargin
\DeclareMathOperator*{\argmax}{arg\,max}
\def\x{{\mathbf{x}}}
\def\tildep{{\tilde{p}_\theta}}
\definecolor{iccvblue}{rgb}{0.21,0.49,0.74}
\title{SteerX: Creating Any Camera-Free 3D and 4D Scenes with Geometric Steering}
\author{
    Byeongjun Park\textsuperscript{\rm1,\rm2}\thanks{Equal contribution, $^\dagger$Corresponding author}  \qquad
    Hyojun Go\textsuperscript{\rm2}\footnotemark[1] \qquad
    Hyelin Nam\textsuperscript{\rm2} \qquad
    Byung-Hoon Kim\textsuperscript{\rm2,\rm3} \\ 
    Hyungjin Chung\textsuperscript{\rm2$\dagger$} \qquad
    Changick Kim\textsuperscript{\rm1$\dagger$} \vspace{0.45em} \\
    \textsuperscript{\rm 1} KAIST \qquad
    \textsuperscript{\rm 2} EverEx \qquad
    \textsuperscript{\rm 3} Yonsei University
    \vspace{0.45em} \\
    \href{https://byeongjun-park.github.io/SteerX/}{https://byeongjun-park.github.io/SteerX/}
}
\begin{document}

\maketitle
\begin{strip}
    \centering
    \vspace{-5.55em}
    \resizebox{\textwidth}{!}{
    \includegraphics[width=\textwidth]{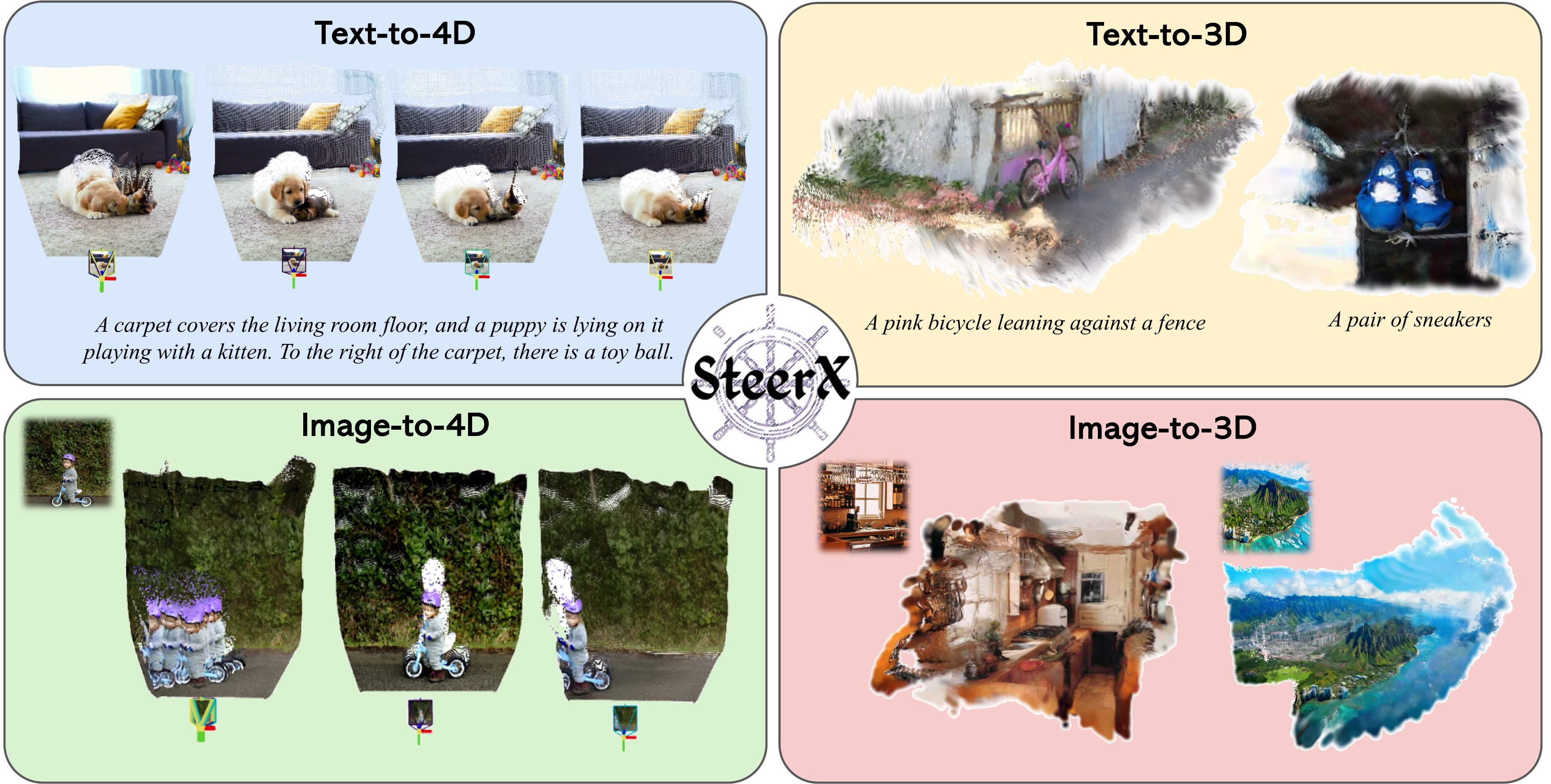}  
    }
    \vspace{\abovefigcapmargin}
    \vspace{-0.5cm}
    \captionof{figure}{\textbf{SteerX} is a zero-shot inference-time steering approach that seamlessly integrates video generative models~\cite{genmo2024mochi, kong2024hunyuanvideo, yang2024cogvideox, sun2024dimensionx, go2024splatflow} and feed-forward scene reconstruction models~\cite{tang2024mv, zhang2024monst3r, go2024splatflow}, enabling any 3D and 4D scene generation without explicit camera conditions.}
    \label{fig:teasure}
    \vspace{-0.1cm}
\end{strip}

\begin{abstract}



Recent progress in 3D/4D scene generation emphasizes the importance of physical alignment throughout video generation and scene reconstruction. However, existing methods improve the alignment separately at each stage, making it difficult to manage subtle misalignments arising from another stage. Here, we present SteerX, a zero-shot inference-time steering method that unifies scene reconstruction into the generation process, tilting data distributions toward better geometric alignment. To this end, we introduce two geometric reward functions for 3D/4D scene generation by using pose-free feed-forward scene reconstruction models. Through extensive experiments, we demonstrate the effectiveness of SteerX in improving 3D/4D scene generation.

\end{abstract}
    
\section{Introduction}
\label{sec:intro}

Generating 3D and 4D scenes from images or text prompts has attracted significant attention due to its potential applications in AR/VR and robotics~\cite{tewari2022advances, chen2024survey, wu2024recent}. This progress is largely driven by the advancement of generative models~\cite{kong2024hunyuanvideo, genmo2024mochi, HaCohen2024LTXVideo, blattmann2023stable, yang2024cogvideox} and neural scene representations~\cite{mildenhall2021nerf, kerbl20233d, wu20244d}. Generative models learn the underlying distribution of large-scale and high-quality video data, leveraging their scalability without imposing explicit physical constraints. In contrast, neural scene representations lift these distributions into structured 3D or 4D spaces, enforcing physical consistency and enabling more faithful scene modeling.

In this context, recent efforts~\cite{he2024cameractrl, wang2024motionctrl, bahmani2024ac3d, bahmani2024vd3d, zhao2024genxd, sun2024dimensionx, gao2024cat3d, wu2024cat4d} have focused on producing geometrically consistent images by fine-tuning generative models with camera pose parameters, where the generation process follows user-defined or pre-defined camera trajectories. This facilitates seamless 3D and 4D scene reconstructions but requires complex optimization to regress neural scene representations, increasing computational overhead and hindering practical adaptation.


To address this inefficiency, another line of work~\cite{li2025director3d, go2024splatflow} has introduced text-conditioned camera pose generation and feed-forward decoding of pixel-aligned 3DGS~\cite{kerbl20233d}. This approach trains 3DGS decoders to learn a mapping function that directly reconstructs 3D scenes from multi-view images, leveraging multiple 3D scene datasets for improved 3D scene modeling. However, the limited datasets used to train pose-conditioned decoders fail to capture the various camera trajectories produced by video generative models. This results in slight misalignments between video generation and scene reconstruction, requiring further refinement steps~\cite{poole2022dreamfusion, zhu2023hifa, yang2023learn} to enhance geometric consistency.



To sum up, previous works handle geometric alignment separately in either video generation or scene reconstruction. This makes it difficult to address cross-stage misalignments, as inconsistencies in one stage may not be fully corrected in the other. Despite recent efforts to mitigate this issue, achieving precise alignment remains an ongoing challenge due to the indistinct link between the two stages.

On the other hand, zero-shot guidance methods that alter the sampling trajectory of generative models to enforce physical constraints have been widely explored in recent literature~\cite{daras2024survey, uehara2025reward}.
These methods have shown great generalizability and performance, contingent upon a {\em well-defined reward}. For instance, in inverse imaging problems, a closed-form likelihood function can guide prior sampling processes toward posterior sampling~\cite{chung2022diffusion, song2023pseudoinverse}. However, directly applying zero-shot guidance to 3D/4D scene generation remains challenging due to the ambiguity in defining reward functions and the significant computational overhead.

In this work, we address this gap by introducing SteerX, a zero-shot inference-time steering method that seamlessly integrates video generation and scene reconstruction, generating geometrically aligned high-quality 3D and 4D scenes. To achieve this, we define geometric reward functions to assess physical consistency across video frames, drawing inspiration from cycle consistency~\cite{zhu2017unpaired} to guide the generation process toward high-reward outputs.
We propose two geometric reward functions tailored for 3D and 4D scene generation. To evaluate geometric consistency across multiple video frames, we extend MEt3R~\cite{asim2025met3r}, a recent evaluation metric for image pairs, by incorporating advanced pose-free feed-forward scene reconstruction methods such as MV-DUSt3R+~\cite{tang2024mv} and MonST3R~\cite{zhang2024monst3r}. These reconstruction methods lift intermediate generated video frames\footnote{Denoised predictions, as used in \cite{chung2022diffusion}} during the reverse sampling process into 3D and 4D spaces. The reconstructed scenes are then projected back into the original image space for consistency evaluation.

However, even when the reward is defined, one cannot directly adopt standard gradient guidance approaches~\cite{chung2022diffusion,song2023pseudoinverse,chung2023decomposed} for several reasons. For one, gradient guidance can be used only when the reward is fully differentiable, which hampers flexibility in the design of reward functions. Moreover, there exist substantial memory constraints to compute gradients for dozens of video frames, limiting their scalability for long video sequences. Therefore, we propose a steering algorithm based on sequential Monte Carlo (SMC)~\cite{doucet2001introduction}, which has gained recent traction~\cite{wu2023practical, li2024derivative, dou2024diffusion,singhal2025general} due to its applicability to non-differentiable reward functions and favorable inference-time scaling properties.

By designing tailored reward functions for geometrically plausible and accurate 3D/4D scene generation, along with a guided sampling process based on SMC, SteerX serves as a fully general framework that integrates {\em any} generative video models with {\em any} 3D reconstruction models, enabling diverse tasks, including Image-to-3D, Image-to-4D, Text-to-3D, and Text-to-4D generation. Through extensive experiments on both 3D and 4D scene generation with various pre-trained video generative models~\cite{go2024splatflow,sun2024dimensionx,genmo2024mochi,kong2024hunyuanvideo,yang2024cogvideox}, we demonstrate the effectiveness and broad applicability of our approach. Furthermore, we show that by increasing the number of particles, we see favorable scaling properties, opening up new possibilities that have been rather untouched in 3D/4D scene generation: test-time scaling.



\section{Related Works}
\label{sec:related_works}

\subsection{3D and 4D Scene Reconstruction}

Various neural scene representations have been explored for 3D and 4D scene reconstruction. Most recent works tend to use Neural Radiance Fields (NeRF)~\cite{mildenhall2021nerf} and 3D Gaussian Splats (3DGS)~\cite{kerbl20233d} for its high-quality rendering of underlying static scenes~\cite{barron2021mip, barron2023zip, yu2024mip}. These scene representations have been extended to reconstruct 4D scenes by incorporating additional supervisions (\eg, depth maps~\cite{yang2024depth} and segmentation masks~\cite{kirillov2023segment}) along with deformable fields~\cite{park2024point, park2021nerfies, li2021neural, park2021hypernerf, wang2024shape} and Gaussian splats~\cite{wu20244d, wan2024superpoint}. 

Recent advancements in 3D and 4D scene datasets~\cite{yu2023mvimgnet, ling2024dl3dv, reizenstein2021common, zhou2018stereo, liu2021infinite, dai2017scannet, yeshwanth2023scannet++} have enabled the development of generalizable scene representations, allowing feed-forward scene reconstruction methods~\cite{cong2023enhancing, wang2025freesplat, charatan2024pixelsplat, chen2025mvsplat, szymanowicz2024splatter}. Further advancements have extended these techniques to pose-free feed-forward 3D reconstruction models~\cite{wang2024dust3r, ye2024no, tang2024mv, smart2024splatt3r, zhang2024monst3r}, eliminating the need for camera pose conditions and enabling the scene reconstruction from hundreds of unposed images, making the process more flexible and scalable. We leverage these powerful pose-free 3D reconstruction models, specifically MV-DUSt3R+~\cite{tang2024mv} and MonST3R~\cite{zhang2024monst3r}, to assess geometric consistency and lift generated video frames into 3D and 4D spaces.

\subsection{3D and 4D Scene Generation}

Recent 3D and 4D scene generation approaches commonly adopt a two-stage framework, where video generation is followed by scene reconstruction, with a focus on enhancing geometric consistency across both stages. Some works fine-tune video generative models with camera pose parameters~\cite{bahmani2024ac3d, he2024cameractrl, wang2024motionctrl, gao2024cat3d, wu2024cat4d, zhao2024genxd, yu2024viewcrafter, liu2024reconx}, where the generation process is conditioned on a user-defined or pre-defined camera trajectory to produce geometrically aligned video frames. These camera-conditioned video frames are used as training data to regress neural scene representations. While they produce plausible scenes, optimizing scene representations from scratch incurs high computational costs.

In contrast, camera-free approaches~\cite{li2025director3d, go2024splatflow} reconstruct 3D scenes without camera conditions, improving geometric alignment in the scene reconstruction stage. They generate text-conditioned camera poses and use a feed-forward 3DGS decoder trained on diverse 3D scene datasets to estimate underlying 3D structures from posed multi-view images. However, these methods often fail to fully capture the diverse camera trajectories produced by video generative models. Instead of addressing alignment separately at each stage, SteerX integrates scene reconstruction models directly into the video generation process, ensuring that video frames are optimally structured for accurate scene reconstruction and enabling seamless integration of both stages.

\subsection{Guided Sampling}

Generative models are trained to represent data distributions across various domains, including images~\cite{ho2020denoising, peebles2023scalable, park2024switch, park2023denoising}, videos~\cite{ho2022video, harvey2022flexible},  and 3D~\cite{liu2023syncdreamer, woo2024harmonyview}. 
However, sampling from tilted distributions (\eg, posterior distributions from measurements) is often preferable to drawing from naive prior.


Recent works~\cite{xu2023imagereward, black2024training} have encoded user preferences through reward functions and fine-tuned generative models to maximize these rewards, effectively sampling from a tilted distribution toward high-reward outputs. However, fine-tuning approaches require extensive training and hampers the generalizability by modifying the base distribution.

In contrast, guided sampling methods~\cite{song2020score,chung2022diffusion,song2023pseudoinverse,daras2024survey,uehara2025reward} allow sampling from a tilted distribution in a zero-shot manner while keeping the base distribution intact. Among various paradigms, two classes stand out: gradient-based guidance~\cite{chung2022diffusion,song2023pseudoinverse} and particle \add{filtering}\footnote{Throughout the paper, we flexibly use the term particle filtering and SMC to refer to this class.}\cite{wu2023practical, li2024derivative, dou2024diffusion}. Gradient-based guidance is simple to implement and widely applied but requires differentiable reward functions and is memory-intensive. In contrast, particle filtering is amenable to non-differentiable rewards, and is more memory-efficient, as it does not rely on backpropagation.
Recently, Feynman-Kac Steering (FKS)~\cite{singhal2025general} has unified previous \add{particle filtering} techniques, introducing a general steering framework that can be applied to any reward function. SteerX builds upon FKS, integrating geometric reward functions and exploring a tailored design space for geometric steering, making it applicable to {\em any} video generative models.

\begin{figure*}[t]
    \centering
    \includegraphics[width=\linewidth]{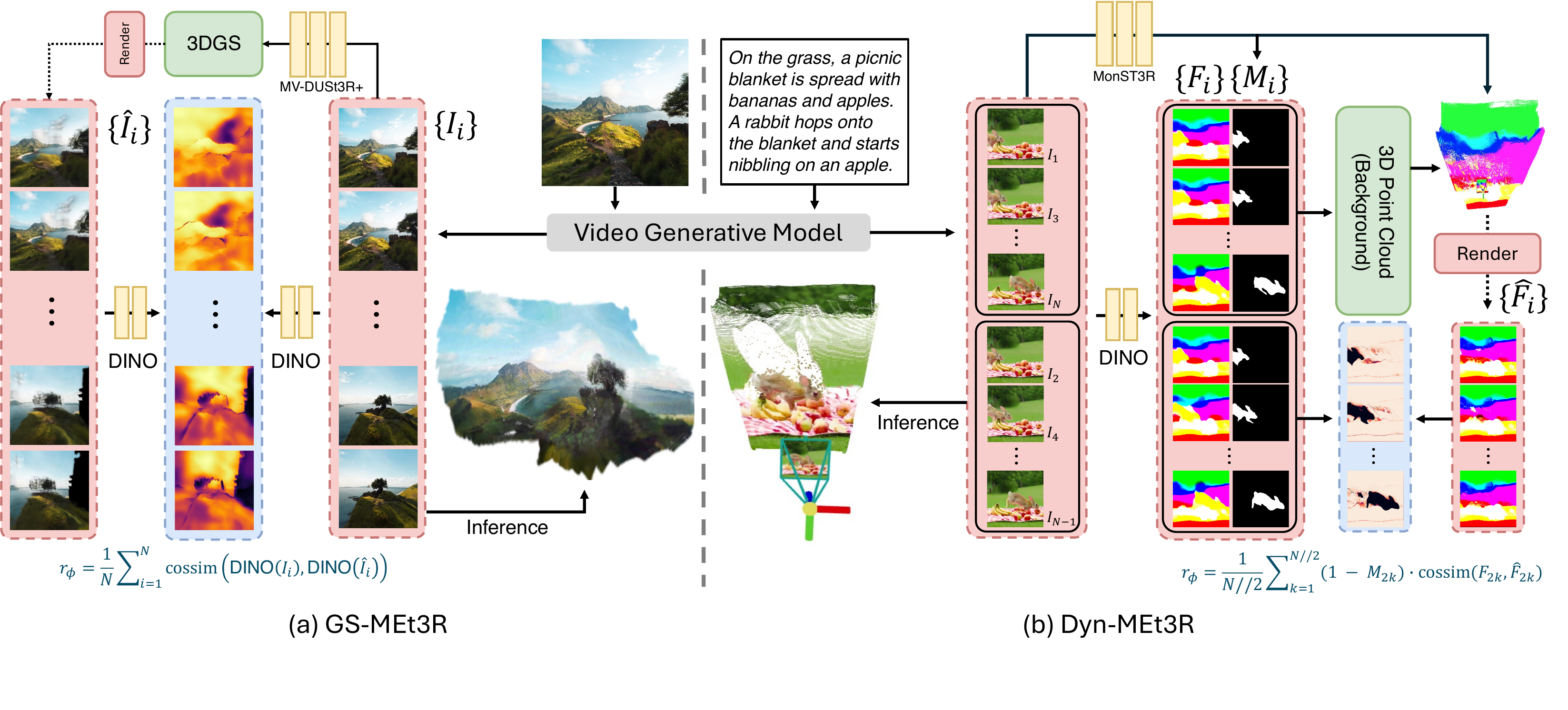}
    \vspace{-1.3cm}
    \caption{\textbf{An overview of geometric rewards.} Our reward functions assess the geometric consistency of intermediate generated video frames by computing the feature similarity of upscaled DINO features. (a) GS-MEt3R evaluates feature similarity between the original video frames and their corresponding rendered images from 3DGS. (b) Dyn-MEt3R focuses on background regions by unprojecting background features from half of the video frames and reprojecting them onto the remaining frames to compute feature similarity.}
    \vspace{\belowfigcapmargin}
    \label{fig:overview}
\end{figure*}

\section{Preliminary: Feynman-Kac Steering}
\label{sec:prelim}

Diffusion models~\cite{song2020score, dhariwal2021diffusion} are learned to reverse a forward process $\mathbf{x}_t \sim q(\mathbf{x}_t | \mathbf{x}_0 = \mathbf{x})$ with $T$ timesteps, that gradually transforms a data sample $\mathbf{x}$ into Gaussian noise. This reverse process $p_{\theta}(\mathbf{x}_{t-1} | \mathbf{x}_t)$ is defined as:
\begin{equation}
\label{eq:reverse_generative_dist}
    p_{\theta}(\mathbf{x}_{T}, \mathbf{x}_{T-1}, \dots, \mathbf{x}_1, \mathbf{x}_{0}) \propto \prod_{t=1}^{T}{p_{\theta}(\mathbf{x}_{t-1} | \mathbf{x}_t)}.
\end{equation}

The main goal of inference-time steering methods is to sample tilted distributions toward maximizing an exponential of user-specified reward function $r_{\phi}(\mathbf{x}_0)$ as:
\begin{equation}
\label{eq:tilted_dist}
    \tilde{p}_{\theta}(\mathbf{x}_{0}) \propto {p_{\theta}(\mathbf{x}_0)\exp(\lambda r_{\phi}(\mathbf{x}_0))}.
\end{equation}

Feynman-Kac Steering (FKS)~\cite{singhal2025general} introduces a system of multiple interacting diffusion paths (\ie. particles), where each path $\mathbf{x}_{T:0}=(\mathbf{x}_T, \dots, \mathbf{x}_0)$ is resampled at intermediate steps based on potentials $G_t(\mathbf{x}_{T:t})$, resulting in a sequence of tilted distributions $\tilde{p}_{\theta}(\mathbf{x}_{T:t})$:
\begin{equation}
    \tilde{p}_{\theta}(\mathbf{x}_{T:t}) \propto \prod_{s=T}^{t}{p_{\theta}(\mathbf{x}_{T:s})G_s(\mathbf{x}_{T:s})},
\end{equation}
where the product of potentials matches the total tilt of the base distribution as  $\prod_{t=T}^{0}{G_t(\mathbf{x}_{T:t})}=\exp(\lambda r_{\phi}(\mathbf{x}_0))$. These potentials are generally defined as SMC~\cite{wu2023practical} to score the particles at each transition step $\mathbf{x}_{t+1} \rightarrow \mathbf{x}_t$ based on intermediate rewards $r_{\phi}(\mathbf{x}_t)$. 

\section{Methods}
\label{sec:methods}

In this section, we introduce SteerX, a zero-shot inference-time steering method for 3D/4D scene generation. SteerX unifies pose-free feed-forward scene reconstruction models into the video generation process, iteratively tilting the data distribution towards geometrically aligned samples. In~\Cref{sec:rewards}, we define two geometric reward functions to evaluate geometric consistency in generated multi-view images and dynamic videos, respectively. In~\Cref{sec:steer}, we detail our SMC-based steering algorithm.

\subsection{Geometric Rewards}
\label{sec:rewards}

Our geometric rewards build upon MEt3R~\cite{asim2025met3r}, which utilizes DUSt3R~\cite{wang2024dust3r} to measure feature similarity in overlapping regions between image pairs. It extracts upscaled DINO features~\cite{caron2021emerging, fu2024featup} and computes the cosine similarity between reference view features and the rendered features from another viewpoint. It can evaluate multiple images by computing feature similarity across all possible image pairs and averaging the scores. However, as the number of images increases, this becomes computationally expensive, making the overall generation process inefficient.

To address this, as shown in~\cref{fig:overview}, we introduce two geometric reward functions, GS-MEt3R and Dyn-MEt3R, which measure geometric consistency across video frames tailored for 3D/4D scene generation, respectively. We employ pose-free feed-forward scene reconstruction methods, MV-DUSt3R+~\cite{tang2024mv} and MonST3R~\cite{zhang2024monst3r}, to reconstruct 3D and 4D scenes, which are subsequently projected back into image space to assess consistency with the original frames. This mitigates computational bottlenecks while efficiently evaluating global consistency in the generated video frames.

\vspace{\paramargin}
\paragraph{3D Scene Reward.}
We extend MEt3R~\cite{asim2025met3r} in distinct ways to better suit 3D and 4D scene reconstruction models. For 3D scenes, recent methods~\cite{ye2024no, tang2024mv} introduce a mapping function $f_\phi$ that directly reconstructs 3DGS and camera poses from $N$ sparse views $\{I_i \in \mathbb{R}^{H \times W \times 3}\}^{N}_{i=1}$ as:
\begin{equation}
    f_{\phi} : \{I_i\}_{i=1}^{N} \rightarrow 
    \left\{
    \begin{array}{@{}l@{}}
        \{ \boldsymbol{\mu}_j, \mathbf{o}_j, \mathbf{\Sigma}_j, \mathbf{c}_j \}_{j=1}^{N \times H \times W} \\[6pt]
        \{P_i\}_{i=1}^{N}
    \end{array}
    \right.,
\end{equation}
where the 3D scene is represented as Gaussian parameters, including position $\boldsymbol{\mu}$, volume density $\mathbf{o}$, covariance $\mathbf{\Sigma}$, and color $\mathbf{c}$. Then, we produce images $\{\hat{I}_i\}^N_{i=1}$ by rendering the scene with estimated camera poses $\{P_i\}^N_{i=1}$, ensuring they correspond to the same viewpoints as the input images. Finally, GS-MEt3R is measured by computing the cosine similarity between upscaled DINO~\cite{caron2021emerging, fu2024featup} features of input images $\{F_i\}^{N}_{i=1}$ and rendered images $\{\hat{F}_i\}^{N}_{i=1}$ as:
\begin{equation}
    r_\phi = \frac{1}{N}\sum\limits_{i=1}^{N}\sum\limits_{j=1}^{H}\sum\limits_{k=1}^{W} \frac{F^{jk}_i \cdot \hat{F}^{jk}_i }{\vert\vert F^{jk}_i \vert\vert \vert\vert \hat{F}^{jk}_i \vert\vert}.
\end{equation}

This approach not only offers a more direct evaluation of the physical consistency of the 3D scene compared to averaging consistency across all image pair combinations but also indirectly assesses the rendering quality of 3DGS. In other words, high GS-MEt3R scores indicate both geometric alignment and visually realistic 3D scenes.

\vspace{\paramargin}
\paragraph{4D Scene Reward.}
While the 3D scene reward function is based on 3DGS, feed-forward dynamic scene reconstruction with Gaussians remains underexplored, making it difficult to directly apply 3DGS-based rewards. Instead, we employ 3D point cloud representations, where MonST3R~\cite{zhang2024monst3r} reconstructs it with point maps $\{X_i\}_{i=1}^{N}$, binary dynamic masks $\{M_i\}_{i=1}^{N}$, and camera poses $\{P_i\}_{i=1}^{N}$ as:
\begin{equation}
    f_{\phi} : \{I_i\}_{i=1}^{N} \rightarrow \{X_i, M_i, P_i\}_{i=1}^{N},
\end{equation}
where we leverage these time-varying point clouds as 4D scene representations and design a reward function for evaluating geometric consistency in dynamic videos. 

Since dynamic masks are produced in the camera pose estimation process to retain only high-confidence points, a well-reconstructed 4D scene should effectively filter out dynamic objects while preserving geometric consistency in the background regions. Therefore, we evaluate the consistency only for background regions of video frames, which are not filtered out by the dynamic mask. To this end, we first split $N$ video frames into two subsets: $\mathcal{I}_{src} = \{I_1, I_3, \dotsc, I_N\}$ and $\mathcal{I}_{tgt} = \{I_2, I_4, \dotsc, I_{N-1}\}$. Then, we unproject the upsampled DINO features of background regions in $\mathcal{I}_{src}$ into 3D space using MonST3R. Finally, we reproject them onto the viewpoint of $\mathcal{I}_{tgt}$, where the rendered features $\hat{\mathcal{F}}_{tgt} = \{\hat{F}_1, \hat{F}_3, \dotsc, \hat{F}_N\}$ are used to compute the feature similarity with background regions in $\mathcal{I}_{tgt}$ as:
\begin{equation}
    r_i =\sum\limits_{j=1}^{H}\sum\limits_{k=1}^{W} (1 -M^{jk}_{i})\frac{F^{jk}_{i} \cdot \hat{F}^{jk}_{i} }{\vert\vert F^{jk}_{i} \vert\vert \vert\vert \hat{F}^{jk}_{i} \vert\vert},
\end{equation}%
\begin{equation}
    r_\phi = \frac{1}{(N//2)}\sum\limits_{i=1}^{N//2}r_i.
\end{equation}

With these geometric rewards, we can effectively steer pre-trained video generative models to produce geometrically consistent video frames, which are then directly reconstructed into 3D and 4D spaces using the feed-forward 3D reconstruction models employed in the geometric rewards.

\subsection{Geometric Steering}
\label{sec:steer}

\begin{figure}[t]
\vspace{-1em}
\begin{algorithm}[H]
    \textbf{Required:} $\mathbf{v}$-parametrized diffusion model $\mathbf{v}_\theta$, reward function $r_\phi$, number of particles $k$, and initial noise $\{\mathbf{x}^{i}_{T}\}^{k}_{i=1} \sim \mathcal{N}(0, I)$.
    \caption{SteerX (v-prediction)}\label{alg:geo_steering} 
    \textbf{Sampling:}
    \begin{algorithmic}[1]
        \For{$t \in \{T - 1, \dotsc, 0\}$}
            \vspace{1mm}
            \For{$i \in \{1 \dotsc k\}$}
                \vspace{1mm}
                \State $\hat{\mathbf{x}}^{i}_{0} \gets \sqrt{\bar{\alpha}_{t+1}}\mathbf{x}^{i}_{t+1} - \sqrt{1 - \bar{\alpha}_{t+1}}\mathbf{v}_\theta(\mathbf{x}^{i}_{t+1})$
                \vspace{1mm}
                \State $\mathbf{x}^{i}_{t} \gets \textit{dpm-solver}(\hat{\mathbf{x}}^{i}_{0}, \mathbf{x}^{i}_{t+1})$
                \vspace{1mm}
                \State $\mathbf{s}^{i}_{t} \gets r_\phi(\hat{\mathbf{x}}^{i}_{0}) \hfill \triangleright \textit{Intermediate rewards}$
                \vspace{1mm}
                \State ${G}^{i}_{t} \gets \exp(\lambda\max^T_{j=t}(\mathbf{s}^{i}_{j})) \hfill \triangleright \textit{Potential}$
                \vspace{1mm}
            \EndFor
            \vspace{1mm}
            \State $\{\mathbf{x}^{i}_t\}^k_{i=1} \sim \text{Multinomial}(\{\mathbf{x}^i_{t}, G^i_{t}\}^k_{i=1}$) \hfill $\triangleright$ \textit{Resample}
        \vspace{1mm}
        \EndFor
        \vspace{1mm}
        \State $l \gets \argmax_{i \in \{1, \dotsc, k\}} r_\phi(\mathbf{x}^i_0)$
        \State \textbf{return} $\mathbf{x}^l_0$
    \end{algorithmic}
\end{algorithm}
\vspace{-2.2em}
\end{figure}
Using the rewards defined in \Cref{sec:rewards}, let $\tildep$ in \eqref{eq:tilted_dist} be the distribution we wish to sample from. SMC operates with the three following steps:
\begin{enumerate}[itemsep=2mm]
    \item (\textbf{Proposal}) For each particle $i$, sample from the proposal distribution $\x_t^i \sim q_t(\x_t|\x_{t+1}^i)$
    \item (\textbf{Weighting}) Compute weights from reward-based potentials $\omega_t^i = \frac{p_\theta(\x_t^i|\x_{t+1}^i)}{q_t(\x_t^i|\x_{t+1}^i)}G_t(\x_{T:t}^i)$
    \item (\textbf{Resampling}) Draw new particles from the multinomial distribution $\{\mathbf{x}^{i}_t\}^k_{i=1} \sim \text{Multinomial}(\{\mathbf{x}^i_{t}, G^i_{t}\}^k_{i=1})$
\end{enumerate}
Two choices should be made: the potential $G_t$, and the proposal distribution $q_t$. For the potential, we use max potential
\begin{align}
\label{eq:max_potential}
    G_t(\x_{T:t})^i := \exp \left(
    \lambda \max_{j=t}^T [r_\phi(\hat\x_0)]
    \right),
\end{align}
with
\begin{align}
\label{eq:max_potential_0}
    G_0(\x_{T:0}) := \exp \left(\lambda r_\phi(\x_0)\right)\left(
    \prod_{t=1}^T G_t(\x_{T:t})
    \right)^{-1},
\end{align}
such that the particle with the highest reward is preferred. Notice that we use the Tweedie estimate $\hat\x_0 = \mathbb{E}[\x_0|\x_t]$~\cite{efron2011tweedie,chung2022diffusion,singhal2025general} in intermediate steps to avoid full reverse sampling.
For the proposal kernel, to save computation, we leverage DPM-solver++~\cite{lu2022dpm}, which approximates the true sampling trajectory limited to small neural function evaluation (NFE). These choices lead to SteerX, as shown in Alg.~\ref{alg:geo_steering}.

\begin{restatable}{proposition}{convergence}
\label{prop:convergence}
Given the reverse generative process in \eqref{eq:reverse_generative_dist}, let $q_t$ be the transition kernel satisfying
\begin{align}
    \frac{p_\theta(\x_{t-1}|\x_t)}{q_t(\x_{t-1}|\x_t)} = 1 + \epsilon_t(\x_{T:t}),
\end{align}
with $|\epsilon_t(\x_{T:t})| \leq \varepsilon$ uniformly. 
Also assume that the error from the reward computed at the approximate state $\hat\x_0$ is bounded, i.e. $|r_\phi(\hat\x_0) - r_\phi(\x_0)| \leq \eta$.
Then, given the defined max potentials in \eqref{eq:max_potential},\eqref{eq:max_potential_0}, Alg.~\ref{alg:geo_steering} samples from
\begin{align}
    \tildep(\x_0) \propto p_\theta(\x_0) \exp (\lambda r_\phi(\x_0))(1 + \mathcal{O}(T\varepsilon + \lambda\eta))
\end{align}
\end{restatable}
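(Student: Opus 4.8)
The plan is to prove the claim as a first-order perturbation of the exact Feynman--Kac identity, isolating the two independent error sources---the proposal mismatch (controlled by $\varepsilon$) and the reward-evaluation error (controlled by $\eta$)---and showing that they enter the normalized target multiplicatively, hence additively at first order. I would begin from the idealized scheme of \Cref{sec:prelim}: if the proposal were the exact reverse kernel $p_\theta(\x_{t-1}|\x_t)$ and the potentials satisfied $\prod_{t=0}^T G_t(\x_{T:t}) = \exp(\lambda r_\phi(\x_0))$ exactly (which is precisely what \eqref{eq:max_potential_0} enforces by absorbing the intermediate factors into $G_0$), then standard SMC consistency gives that the weighted particle system targets $\tildep(\x_0)\propto p_\theta(\x_0)\exp(\lambda r_\phi(\x_0))$ with no bias. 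The remainder of the argument quantifies how far Alg.~\ref{alg:geo_steering} departs from this ideal.

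For the proposal error, I would track the Radon--Nikodym factor that the algorithm omits. Because Alg.~\ref{alg:geo_steering} draws $\x_t^i$ from the DPM-solver kernel $q_t$ but uses only the potentials $G_t^i$ as resampling weights---i.e. it drops the correction $p_\theta(\x_{t-1}|\x_t)/q_t(\x_{t-1}|\x_t)$ from the ideal SMC weight---the realized trajectory target picks up the missing multiplicative factor $\prod_{t=1}^T p_\theta(\x_{t-1}|\x_t)/q_t(\x_{t-1}|\x_t) = \prod_{t=1}^T(1+\epsilon_t)$. Using $|\epsilon_t|\le\varepsilon$ uniformly, I would bound $\log\prod_{t=1}^T(1+\epsilon_t)=\sum_{t=1}^T\log(1+\epsilon_t)$, whose absolute value is at most $T\varepsilon/(1-\varepsilon) = \mathcal{O}(T\varepsilon)$, so the factor equals $1+\mathcal{O}(T\varepsilon)$.

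For the reward error, I would note that the realized potentials are built from the Tweedie reward $r_\phi(\hat\x_0)$ rather than $r_\phi(\x_0)$, so the effective total tilt produced by the run is $\exp(\lambda\max_{j} r_\phi(\hat\x_0^{(j)}))$ instead of $\exp(\lambda r_\phi(\x_0))$. Invoking the assumption $|r_\phi(\hat\x_0)-r_\phi(\x_0)|\le\eta$ at every step together with the order-preservation of the maximum, each argument of the max---and hence the max itself---lies in $[r_\phi(\x_0)-\eta,\, r_\phi(\x_0)+\eta]$, so the realized tilt differs from the ideal one by a factor $\exp(\lambda\delta)$ with $|\delta|\le\eta$, i.e. $1+\mathcal{O}(\lambda\eta)$. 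Multiplying the two perturbation factors and renormalizing then yields $\tildep(\x_0)\propto p_\theta(\x_0)\exp(\lambda r_\phi(\x_0))(1+\mathcal{O}(T\varepsilon))(1+\mathcal{O}(\lambda\eta)) = p_\theta(\x_0)\exp(\lambda r_\phi(\x_0))(1+\mathcal{O}(T\varepsilon+\lambda\eta))$, as claimed.

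I expect the main obstacle to be the SMC bookkeeping that makes this ``multiplicative, non-interacting'' picture rigorous: the intermediate resampling steps select particles using the approximate max-potentials, so I must verify that the bias injected at each resampling does not compound across the $T$ steps beyond the stated first order, and that the max-potential telescoping interacts cleanly with the omitted $p_\theta/q_t$ correction. Concretely, the delicate point is showing that replacing $r_\phi(\x_0)$ by the running maximum of Tweedie rewards perturbs only the terminal normalizing constant and not the intermediate transition structure in a way that would amplify the $\eta$-error---otherwise the clean additive bound $\mathcal{O}(T\varepsilon+\lambda\eta)$ could degrade into something superlinear in $T$ or $\lambda$.
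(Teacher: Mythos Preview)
Your proposal is correct and follows essentially the same approach as the paper: decompose the path weight into the omitted Radon--Nikodym correction $\prod_{t=1}^T(1+\epsilon_t)=1+\mathcal{O}(T\varepsilon)$ and the reward-approximation factor $\exp(\lambda\delta)=1+\mathcal{O}(\lambda\eta)$, multiply, then marginalize out $\x_{T:1}$. Your treatment is in fact slightly more careful than the paper's---you explicitly handle the running max over Tweedie rewards (the paper collapses this to a single $\hat\x_0$) and flag the resampling-interaction issue (the paper silently appeals to the standard SMC path-weight identity via ``integrating out the latent variables'')---but the skeleton is identical.
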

See Appendix~\ref{sec:proof} for the proof. Prop.~\ref{prop:convergence} states that when the approximation errors are sufficiently small, then we can sample from the desired tilted distribution with SteerX.
\section{Experimental Results}
\label{sec:exp}

In this section, we conduct extensive experiments to verify the scalability and effectiveness of SteerX across various video generative models in four scene generation scenarios: Text-to-4D, Image-to-4D, Text-to-3D, and Image-to-3D. We briefly explain our experimental setup in~\Cref{sec:exp_setup} and present both qualitative and quantitative results, as well as design choices of SteerX in~\Cref{sec:exp_results}.

\begin{figure*}[t!]
    \centering
    \setlength\tabcolsep{2pt}
    
    \begin{tabular}{c:cc:cc:cc:cc}
        \multirow{2}[2]{*}[0.06\linewidth]{%
            \parbox{0.12\linewidth}{%
                \centering
                \footnotesize
                \textit{The butterfly lands on the railing, and \textcolor{red}{the camera pans to the right} to capture the tulip field next to it. The entire video is presented in anime style.}%
                    }%
        }  &
        \adjincludegraphics[clip,width=0.1\linewidth,trim={0 0 0 0}]{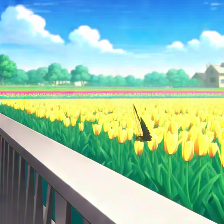} &
        \adjincludegraphics[clip,width=0.1\linewidth,trim={0 0 0 0}]{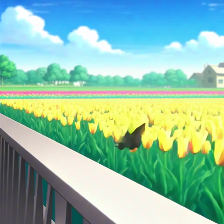} &
        \adjincludegraphics[clip,width=0.1\linewidth,trim={0 0 0 0}]{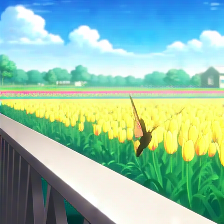} &
        \adjincludegraphics[clip,width=0.1\linewidth,trim={0 0 0 0}]{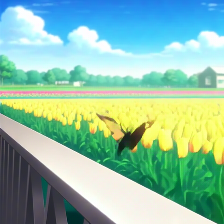} &
        \adjincludegraphics[clip,width=0.1\linewidth,trim={0 0 0 0}]{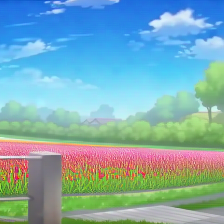} &
        \adjincludegraphics[clip,width=0.1\linewidth,trim={0 0 0 0}]{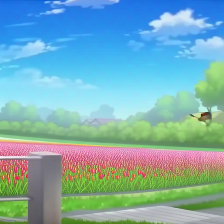} &
        \adjincludegraphics[clip,width=0.1\linewidth,trim={0 0 0 0}]{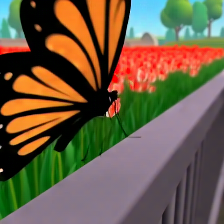} &
        \adjincludegraphics[clip,width=0.1\linewidth,trim={0 0 0 0}]{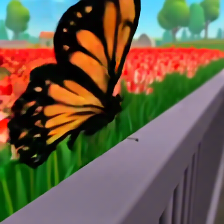} \\
         &
        \adjincludegraphics[clip,width=0.1\linewidth,trim={0 0 0 0}]{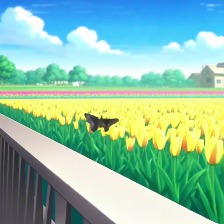} &
        \adjincludegraphics[clip,width=0.1\linewidth,trim={0 0 0 0}]{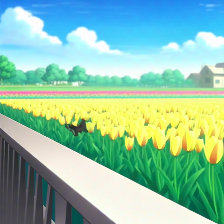} &
        \adjincludegraphics[clip,width=0.1\linewidth,trim={0 0 0 0}]{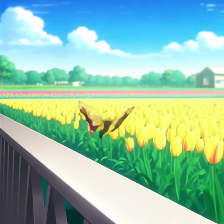} &
        \adjincludegraphics[clip,width=0.1\linewidth,trim={0 0 0 0}]{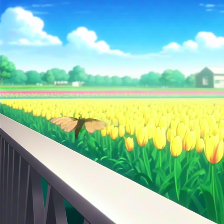} &
        \adjincludegraphics[clip,width=0.1\linewidth,trim={0 0 0 0}]{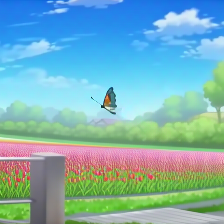} &
        \adjincludegraphics[clip,width=0.1\linewidth,trim={0 0 0 0}]{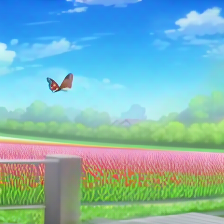} &
        \adjincludegraphics[clip,width=0.1\linewidth,trim={0 0 0 0}]{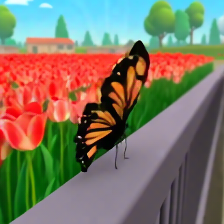} &
        \adjincludegraphics[clip,width=0.1\linewidth,trim={0 0 0 0}]{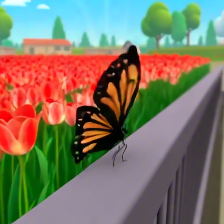} \\
        \arrayrulecolor{gray}\midrule

          \multirow{2}[2]{*}[0.05\linewidth]{%
            \parbox{0.12\linewidth}{%
                \centering
                \footnotesize
                \textit{Silver skyscrapers in the city, with streams of people passing by below. \textcolor{red}{The camera moves vertically from top to bottom during filming.}}%
                    }%
        }  &
        \adjincludegraphics[clip,width=0.1\linewidth,trim={0 0 0 0}]{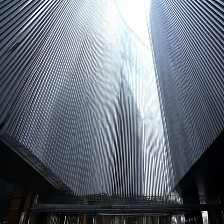} &
        \adjincludegraphics[clip,width=0.1\linewidth,trim={0 0 0 0}]{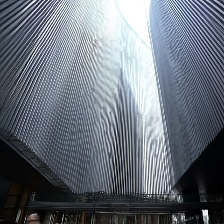} &
        \adjincludegraphics[clip,width=0.1\linewidth,trim={0 0 0 0}]{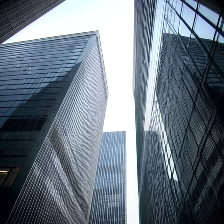} &
        \adjincludegraphics[clip,width=0.1\linewidth,trim={0 0 0 0}]{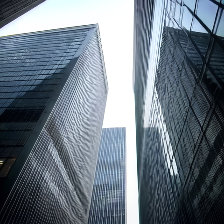} &
        \adjincludegraphics[clip,width=0.1\linewidth,trim={0 0 0 0}]{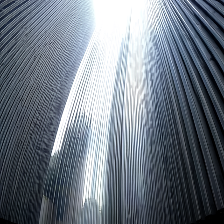} &
        \adjincludegraphics[clip,width=0.1\linewidth,trim={0 0 0 0}]{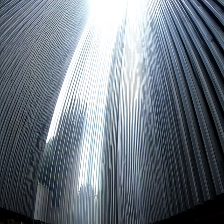} &
        \adjincludegraphics[clip,width=0.1\linewidth,trim={0 0 0 0}]{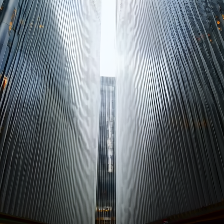} &
        \adjincludegraphics[clip,width=0.1\linewidth,trim={0 0 0 0}]{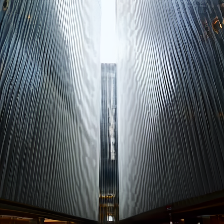} \\
         &
        \adjincludegraphics[clip,width=0.1\linewidth,trim={0 0 0 0}]{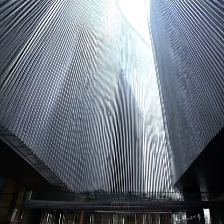} &
        \adjincludegraphics[clip,width=0.1\linewidth,trim={0 0 0 0}]{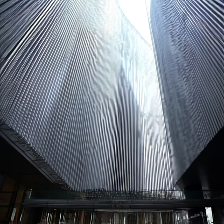} &
        \adjincludegraphics[clip,width=0.1\linewidth,trim={0 0 0 0}]{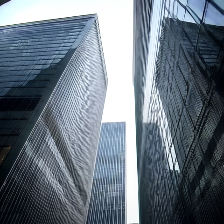} &
        \adjincludegraphics[clip,width=0.1\linewidth,trim={0 0 0 0}]{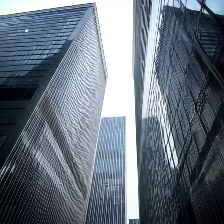} &
        \adjincludegraphics[clip,width=0.1\linewidth,trim={0 0 0 0}]{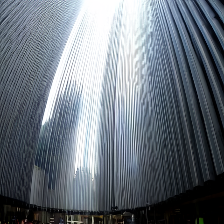} &
        \adjincludegraphics[clip,width=0.1\linewidth,trim={0 0 0 0}]{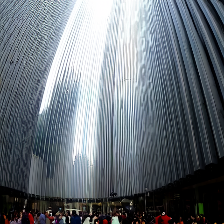} &
        \adjincludegraphics[clip,width=0.1\linewidth,trim={0 0 0 0}]{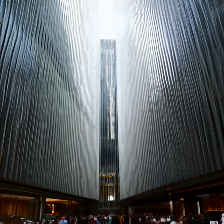} &
        \adjincludegraphics[clip,width=0.1\linewidth,trim={0 0 0 0}]{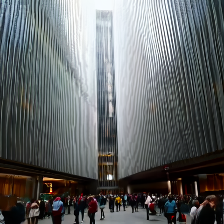} \\

        \multicolumn{1}{c}{\small Text} & \multicolumn{2}{c}{\small Baseline} & \multicolumn{2}{c}{\small + Best-of-N} & \multicolumn{2}{c}{\small + SteerX (later)} & \multicolumn{2}{c}{\small + SteerX (early)}
    \end{tabular}
    \vspace{-1mm}
    \vspace{\abovefigcapmargin}
    \caption{\textbf{Qualitative results of video generation in PenguinVideoBenchmark~\cite{kong2024hunyuanvideo}.} We visualize for Mochi~\cite{genmo2024mochi} (top) and HunyuanVideo~\cite{kong2024hunyuanvideo} (bottom). SteerX achieves the best alignment with camera motion instructions while maintaining high video quality.}
    \vspace{\belowfigcapmargin}
    \label{fig:mochi}
\end{figure*}

\subsection{Experimental Setup}
\label{sec:exp_setup}

\paragraph{Implementation Details.}


For video generative models, we generate 25 frames at a $480 \times 480$ resolution, except for models that require a fixed video length. Since reconstructing 3D scenes using all generated video frames is impractical, we use only eight frames for the reconstruction. We follow the default settings of MV-DUST3R+~\cite{tang2024mv} and MonST3R~\cite{zhang2024monst3r}. We set $\lambda = 10$ for the potential. To encourage exploration and reduce computation overhead, we adopt interval resampling, as in FKS~\cite{singhal2025general}, and apply the steering procedure at $M$ selected timesteps out of $T$ reverse diffusion steps, setting $M = 4$ for 3D scene generation and $M = 2$ for 4D scene generation.

\vspace{\aboveeqmargin}
\paragraph{Baselines and Benchmarks.}

We evaluate our SteerX with Mochi~\cite{genmo2024mochi} and HunyuanVideo~\cite{kong2024hunyuanvideo} for Text-to-4D generation, and CogVideoX~\cite{yang2024cogvideox} for Image-to-4D. For Image-to-3D, we employ S-Director with orbit-left from DimensionX~\cite{sun2024dimensionx}. For Text-to-3D, we utilize the video generation and scene reconstruction models proposed in SplatFlow~\cite{go2024splatflow}. We conduct evaluations on 98 samples from the PenguinVideoBenchmark~\cite{kong2024hunyuanvideo} with camera descriptions for Text-to-4D generation, and 355 samples from VBench-I2V~\cite{huang2024vbench} for Image-to-3D and Image-to-4D generation. Additionally, we use 100 samples from the Single-Object-with-Surrounding set of T3Bench~\cite{he2023t3bench} for Text-to-3D evaluation. For all evaluations, we compare against the best-of-N approach, where $k$ particles are generated independently, and the one with the highest reward is selected. Additionally, we include a baseline ($k=1$) that generates a video and reconstructs the scene without inference-time steering.

\vspace{\aboveeqmargin}
\paragraph{Evaluation Metrics.}
For 4D generation, we evaluated the Aesthetic Score and Subject Consistency in VBench~\cite{huang2024vbench} for video quality and semantic consistency, respectively. Also, we evaluated Temporal Consistency to measure text alignment in Text-to-4D generation, including adherence to camera motion instructions. We evaluated Dynamic Degree to measure the overall dynamicness in Image-to-4D generation. For 3D generation, we followed the evaluation protocols in Director3D~\cite{li2025director3d} and SplatFlow~\cite{go2024splatflow}, measuring the image quality and the CLIP score~\cite{hessel2021clipscore} for both generated multi-view images and rendered images of 3DGS. Finally, for all generation scenarios, we evaluated the geometric consistency of multi-view images and dynamic videos using GS-MEt3R and Dyn-MEt3R, respectively. 

\begin{table}[t]
    \centering
    \setlength\tabcolsep{2pt}
    \resizebox{\linewidth}{!}{
    \begin{tabular}{lcccccc}
       \toprule
        Method & \, $k$ \, & Aesthetic$\uparrow$ & Subject$\uparrow$ & Temporal$\uparrow$ & Dyn-MEt3R$\uparrow$ \\
       \midrule
        Mochi~\cite{genmo2024mochi} & 1 & 0.491 & \underline{0.950} & 0.243 & 0.884 \\
        \arrayrulecolor{gray}\midrule

        \; + BoN & 4 & \underline{0.498} & 0.941 & \underline{0.244} & 0.912 \\
        \rowcolor{gray!25} \; + \textbf{SteerX} (later) & 4 & 0.488 & 0.937 & 0.242 & 0.910 \\
        \rowcolor{gray!25} \; + \textbf{SteerX} (linear) & 4 & 0.490 & 0.949 & \underline{0.244} & \underline{0.918} \\
        \rowcolor{gray!25} \; + \textbf{SteerX} (early) & 4 & \textbf{0.500} & \textbf{0.955} & \textbf{0.248} & \textbf{0.929}\\
        
        \arrayrulecolor{black}\midrule \midrule
        
        HunyuanVideo~\cite{kong2024hunyuanvideo} & 1 & 0.549 & 0.967 & \underline{0.241} & 0.911 \\
        \arrayrulecolor{gray}\midrule

        \; + BoN & 4 & 0.551 & \underline{0.978} & 0.239 & 0.931 \\
        \rowcolor{gray!25} \; + \textbf{SteerX} (later) & 4 & \underline{0.555} & 0.976 & 0.237 & 0.931 \\
        \rowcolor{gray!25} \; + \textbf{SteerX} (linear) & 4 & \textbf{0.556} & 0.973 & \underline{0.241} & \underline{0.943} \\
        \rowcolor{gray!25} \; + \textbf{SteerX} (early) & 4 & \underline{0.555} & \textbf{0.980} & \textbf{0.243} & \textbf{0.964} \\
       \arrayrulecolor{black}\bottomrule
    \end{tabular}
    }
    \vspace{-1mm}
    \vspace{\abovetabcapmargin}
    \caption{\textbf{Quantitative results in PenguinVideoBenchmark~\cite{kong2024hunyuanvideo}.} SteerX with annealed resampling timesteps to the early sampling process significantly improves the overall video quality.}
    \vspace{\belowtabcapmargin}
    \label{tab:penguin}
\end{table}

\subsection{Main Results}
\label{sec:exp_results}

\paragraph{Text-to-4D Generation.}

\Cref{fig:mochi} and \Cref{tab:penguin} present results of text-conditioned video generation. Notably, SteerX significantly outperforms both the baseline and the best-of-N, confirming that our geometric steering effectively resamples particles to guide the data distribution toward geometrically aligned samples. Moreover, quantitative results show a strong correlation between Dyn-MEt3R and other metrics, suggesting that the geometric steering can contribute to both physical consistency and overall video quality.

We observe that annealed resampling timesteps, where the steering procedure at the early sampling stages outperforms linear resampling, whereas resampling at later stages tends to degrade performance. This observation aligns with previous findings~\cite{bahmani2024ac3d, sun2024dimensionx}, where camera poses for video frames are largely determined early in the sampling process as the diffusion model conceptualizes global structures. As a result, resampling at early timesteps coarsely aligns the geometry by tilting the data distribution, laying the foundation for a more precise geometric alignment in later stages.


\begin{figure*}[t!]
    \centering
    \setlength\tabcolsep{1.1pt}
    
    \begin{tabular}{c:ccc:ccc:ccc}
    

        
        \adjincludegraphics[clip,width=0.095\linewidth,trim={0 0 0 0}]{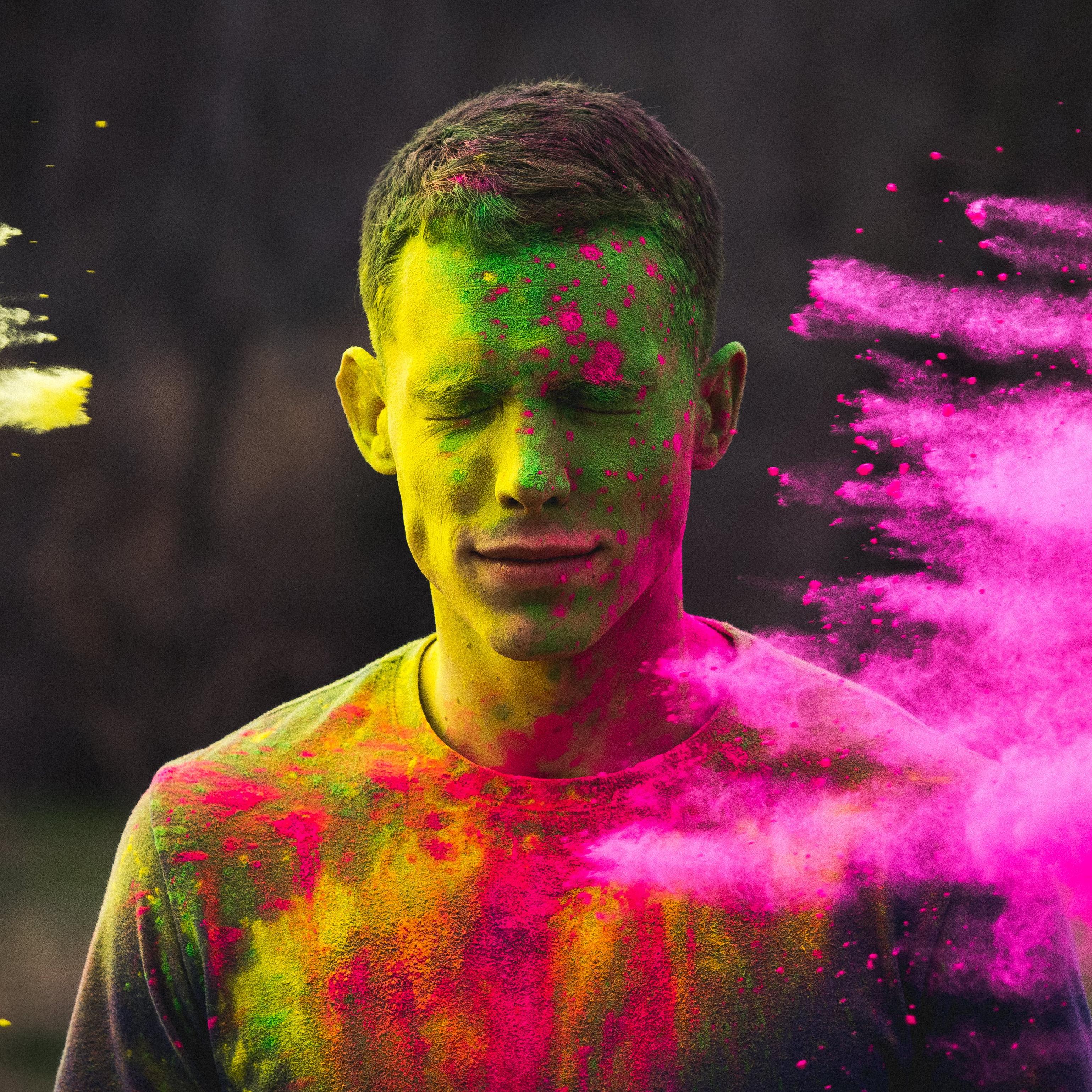} &
        \adjincludegraphics[clip,width=0.095\linewidth,trim={0 0 0 0}]{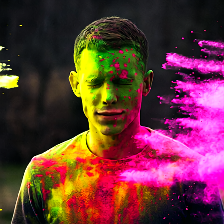} &
        \adjincludegraphics[clip,width=0.095\linewidth,trim={0 0 0 0}]{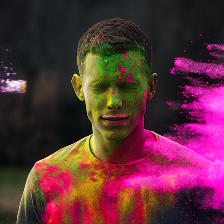} &
        \adjincludegraphics[clip,width=0.095\linewidth,trim={0 0 0 0}]{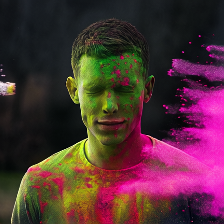} &
        \adjincludegraphics[clip,width=0.095\linewidth,trim={0 0 0 0}]{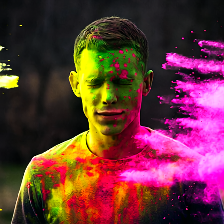} &
        \adjincludegraphics[clip,width=0.095\linewidth,trim={0 0 0 0}]{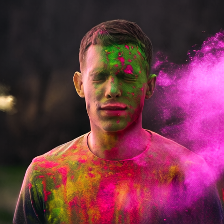} &
        \adjincludegraphics[clip,width=0.095\linewidth,trim={0 0 0 0}]{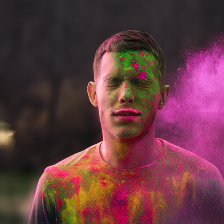} &
        \adjincludegraphics[clip,width=0.095\linewidth,trim={0 0 0 0}]{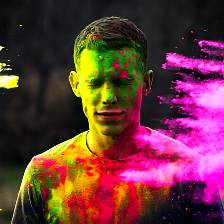} &
        \adjincludegraphics[clip,width=0.095\linewidth,trim={0 0 0 0}]{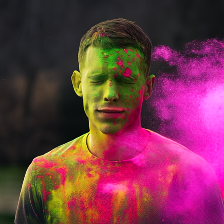} &
        \adjincludegraphics[clip,width=0.095\linewidth,trim={0 0 0 0}]{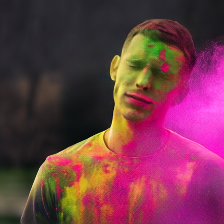} \\

        \adjincludegraphics[clip,width=0.095\linewidth,trim={0 0 0 0}]{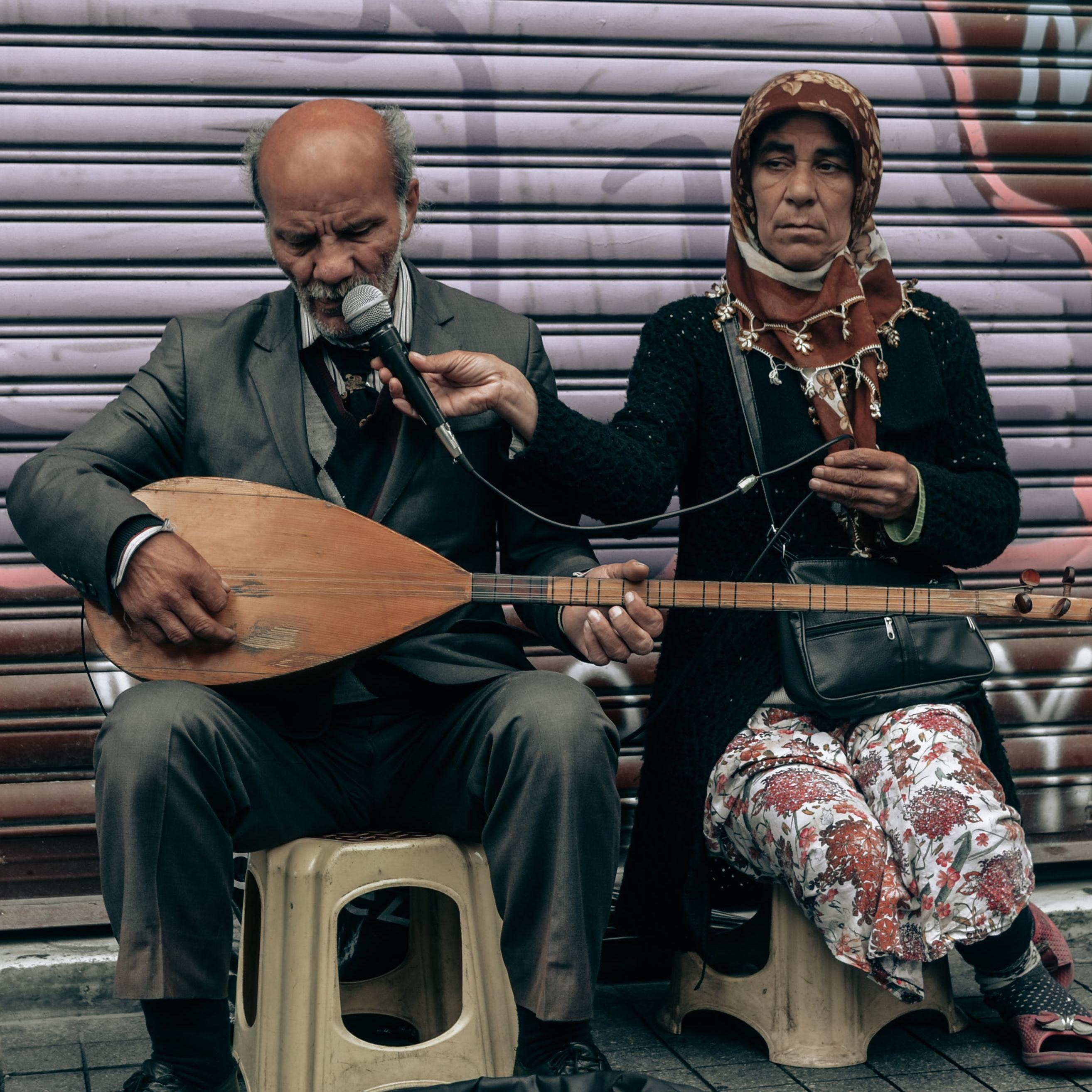} &
        \adjincludegraphics[clip,width=0.095\linewidth,trim={0 0 0 0}]{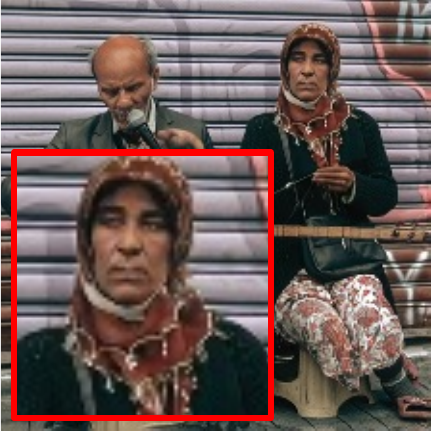} &
        \adjincludegraphics[clip,width=0.095\linewidth,trim={0 0 0 0}]{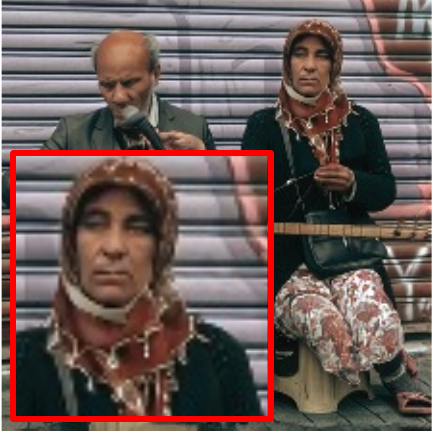} &
        \adjincludegraphics[clip,width=0.095\linewidth,trim={0 0 0 0}]{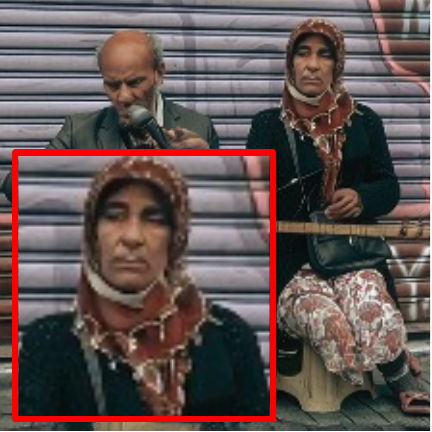} &
        \adjincludegraphics[clip,width=0.095\linewidth,trim={0 0 0 0}]{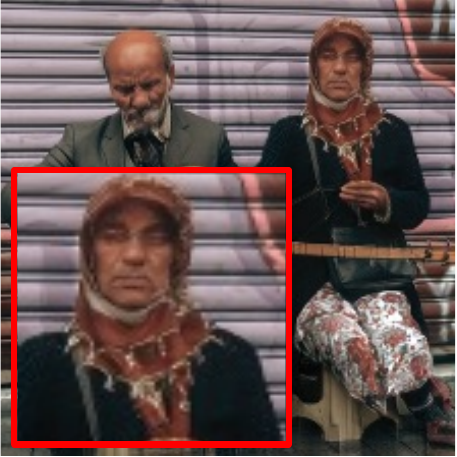} &
        \adjincludegraphics[clip,width=0.095\linewidth,trim={0 0 0 0}]{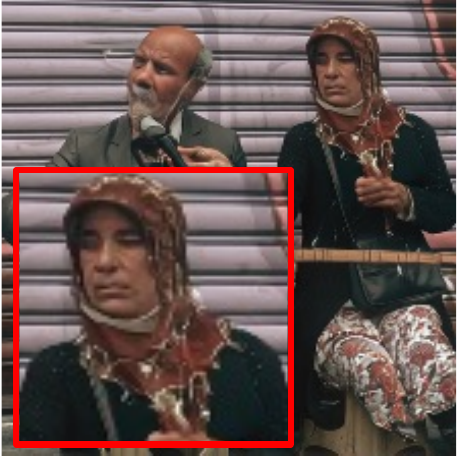} &
        \adjincludegraphics[clip,width=0.095\linewidth,trim={0 0 0 0}]{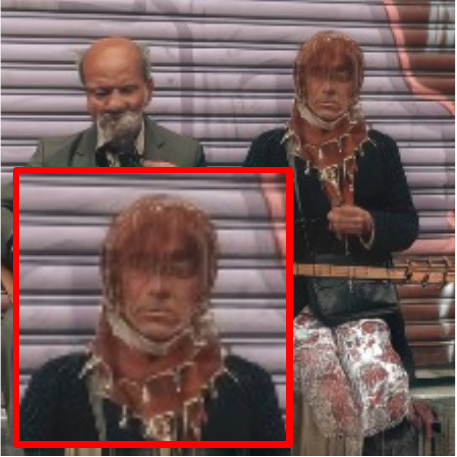} &
        \adjincludegraphics[clip,width=0.095\linewidth,trim={0 0 0 0}]{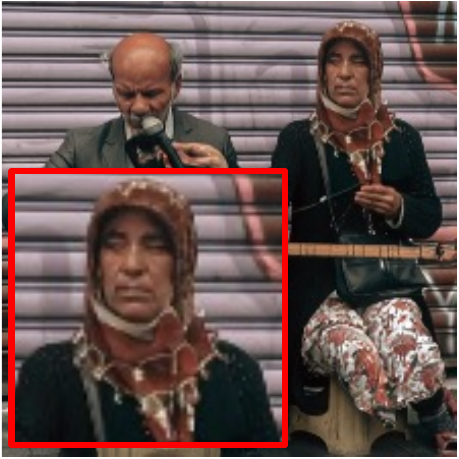} &
        \adjincludegraphics[clip,width=0.095\linewidth,trim={0 0 0 0}]{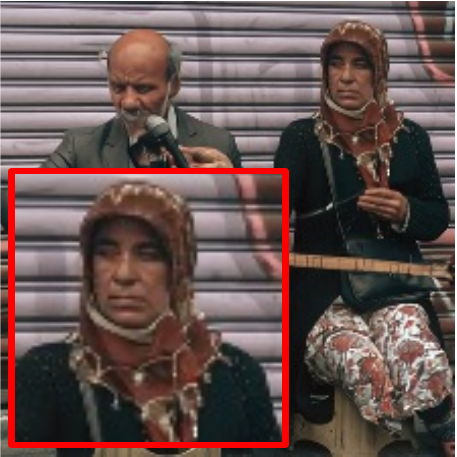} &
        \adjincludegraphics[clip,width=0.095\linewidth,trim={0 0 0 0}]{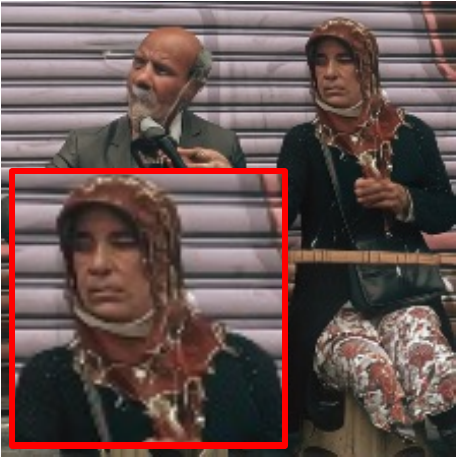} \\
        \multicolumn{1}{c}{\small Input Image} & \multicolumn{3}{c}{\small CogVideoX~\cite{yang2024cogvideox}} & \multicolumn{3}{c}{\small + Best-of-N} & \multicolumn{3}{c}{\small + SteerX (early)}
    \end{tabular}
    \vspace{-1mm}
    \vspace{\abovefigcapmargin}
    \caption{\textbf{Qualitative results of video generation in VBench-I2V~\cite{huang2024vbench}.} SteerX enhances vividness and overall visual quality.}
    \vspace{\belowfigcapmargin}
    \label{fig:cog}
\end{figure*}

\begin{figure*}[t]
    \centering
    \includegraphics[width=\linewidth]{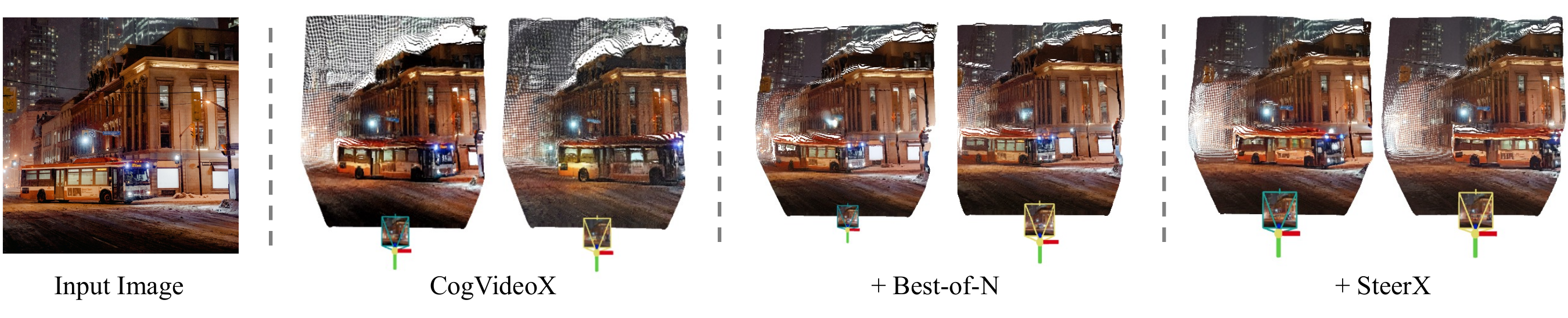}
    \vspace{-0.8cm}
    \caption{\textbf{Qualitative results in Image-to-4D.} SteerX naturally lifts object motion into 4D spaces, while preserving geometric alignments.}
    \vspace{\belowfigcapmargin}
    \label{fig:image_to_4d}
\end{figure*}

\begin{figure}[t]
    \centering
    \includegraphics[width=\linewidth]{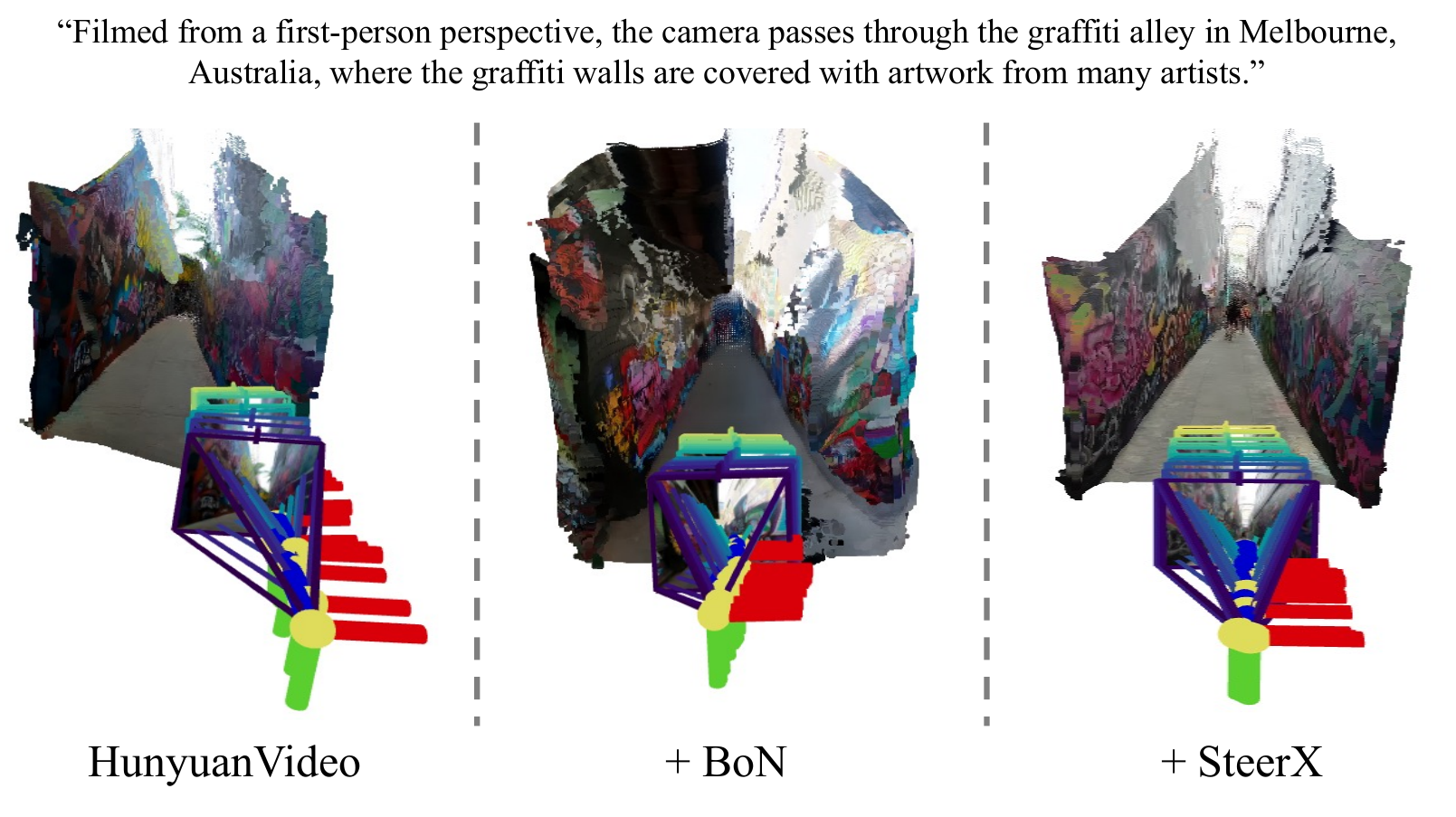}
    \vspace{-0.7cm}
    \caption{\textbf{Qualitative results in Text-to-4D.}}
    \vspace{\belowfigcapmargin}
    \label{fig:qual_4d}
\end{figure}

The geometrically aligned video frames contribute to better 4D scene reconstruction, as shown in~\cref{fig:qual_4d}. While the baseline struggles to capture accurate camera poses and the best-of-N approach results in blurry 4D scenes, our approach precisely estimates camera poses and generates visually realistic 4D scenes. This highlights the effectiveness of SteerX in the seamless integration of video generation and scene reconstruction, ensuring that generated video frames are optimally structured for precise scene reconstruction.

\vspace{\aboveeqmargin}
\paragraph{Image-to-4D Generation.}
\Cref{fig:cog}~and~\Cref{tab:vbench_4d} show the results in image-conditioned video generation, and our SteerX performs the best for all metrics, and we observe that SteerX generates dynamic and realistic video frames, whereas the baseline produces almost motionless frames, and the best-of-N approach results in blurry frames. Also, we validate the scalability of SteerX, as increasing the number of particles $k$ consistently enhances performance across all metrics. As shown in~\cref{fig:image_to_4d}, geometrically aligned video frames can be reconstructed to better 4D scenes, producing realistic dynamic objects and well-aligned backgrounds.

\begin{table}[t]
    \centering
    \setlength\tabcolsep{2pt}
    \resizebox{\linewidth}{!}{
    \begin{tabular}{lcccccc}
       \toprule
        Method & \, $k$ \,  & Aesthetic$\uparrow$ & Subject$\uparrow$ & Dynamic$\uparrow$ & Dyn-MEt3R$\uparrow$ \\
       \midrule
        
        
        CogVideoX~\cite{yang2024cogvideox} & 1 & 0.592 & \underline{0.945} & 0.158 & 0.880 \\
        \arrayrulecolor{gray}\midrule
        \; + BoN & 2 & 0.591 & 0.941 & 0.141 & 0.882 \\
        \rowcolor{gray!25} \; + \textbf{SteerX} (early) & 2 & 0.593 & \underline{0.945} & \underline{0.161} & 0.894 \\
        \midrule
        \; + BoN & 4 & \underline{0.594} & 0.944 & 0.143 & \underline{0.901} \\
        \rowcolor{gray!25} \; + \textbf{SteerX} (early) & 4 & \textbf{0.596} & \textbf{0.957} & \textbf{0.170} & \textbf{0.909}  \\
       \arrayrulecolor{black}\bottomrule
    \end{tabular}
    }
    \vspace{-1mm}
    \vspace{\abovetabcapmargin}
    \caption{\textbf{Quantitative results in VBench-I2V~\cite{huang2024vbench}.} Results demonstrate the scalability of geometric steering as $k$ increases.}
    \vspace{-2mm}
    \label{tab:vbench_4d}
\end{table}

\begin{table}[t]
    \centering
    \setlength\tabcolsep{4pt}
    \resizebox{\linewidth}{!}{
    \begin{tabular}{lcccccc}
       \toprule
        \multirow{2}{*}{Method} & \multirow{2}{*}{\, $k$ \,} & \multicolumn{3}{c}{Rendering} & Sample \\  
        \arrayrulecolor{gray}\cmidrule(lr){3-5} \cmidrule(lr){6-6}
        & & BRISQUE$\downarrow$ & NIQE$\downarrow$ & CLIP-I$\uparrow$ & GS-MEt3R$\uparrow$ \\
        \arrayrulecolor{black}\midrule
        DimensionX~\cite{sun2024dimensionx} & 1 & 37.3 & \underline{4.25} & 82.4 & 0.708 \\
        \arrayrulecolor{gray}\midrule
        \; + BoN & 4 & \underline{29.8} & 4.33 & \underline{83.2} & \underline{0.745} \\
        \rowcolor{gray!25} \; + \textbf{SteerX} (early) & 4 & \textbf{29.7} & \textbf{4.24} & \textbf{83.7} & \textbf{0.749} \\
       \arrayrulecolor{black}\bottomrule
    \end{tabular}
    }
    \vspace{-1mm}
    \vspace{\abovetabcapmargin}
    \caption{\textbf{Quantitative results in VBench-I2V~\cite{huang2024vbench}.} For rendered images, we report results after the refinement process in~\cite{li2025director3d, go2024splatflow}.}
    \vspace{\belowtabcapmargin}
    \label{tab:vbench_3d}
\end{table}

\begin{figure*}[t!]
    \centering
    \setlength\tabcolsep{1.3pt}
    \begin{tabular}{cccc:cccc}
        \raisebox{0.002\linewidth}{\rotatebox{90}{\small SplatFlow~\cite{go2024splatflow}}} 
        \adjincludegraphics[clip,width=0.115\linewidth,trim={0 0 0 0}]{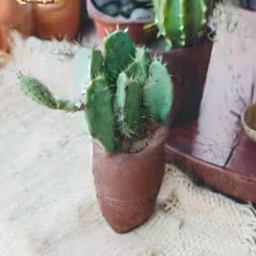} &
        \adjincludegraphics[clip,width=0.115\linewidth,trim={0 0 0 0}]{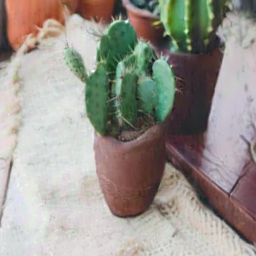} &
        \adjincludegraphics[clip,width=0.115\linewidth,trim={0 0 0 0}]{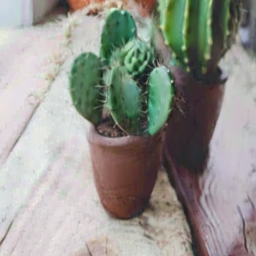} &
        \adjincludegraphics[clip,width=0.115\linewidth,trim={0 0 0 0}]{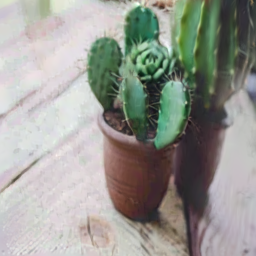} &
        \adjincludegraphics[clip,width=0.115\linewidth,trim={0 0 0 0}]{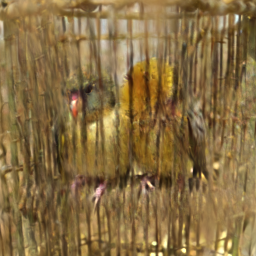} &
        \adjincludegraphics[clip,width=0.115\linewidth,trim={0 0 0 0}]{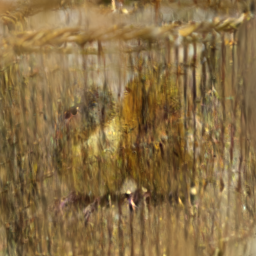} &
        \adjincludegraphics[clip,width=0.115\linewidth,trim={0 0 0 0}]{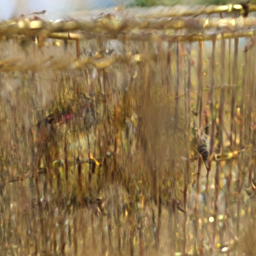} &
        \adjincludegraphics[clip,width=0.115\linewidth,trim={0 0 0 0}]{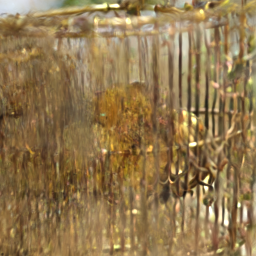}\\
        
        \raisebox{0.03\linewidth}{\rotatebox{90}{\small + BoN}} 
        \adjincludegraphics[clip,width=0.115\linewidth,trim={0 0 0 0}]{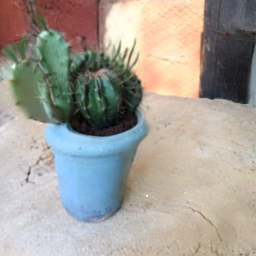} &
        \adjincludegraphics[clip,width=0.115\linewidth,trim={0 0 0 0}]{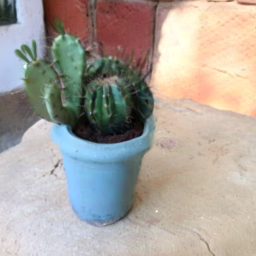} &
        \adjincludegraphics[clip,width=0.115\linewidth,trim={0 0 0 0}]{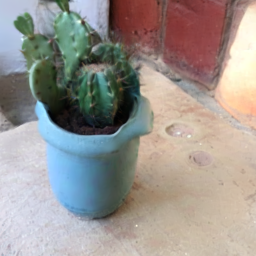} &
        \adjincludegraphics[clip,width=0.115\linewidth,trim={0 0 0 0}]{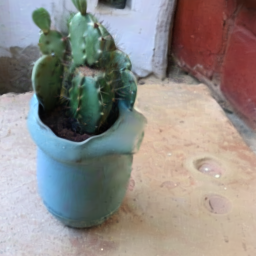} &
        \adjincludegraphics[clip,width=0.115\linewidth,trim={0 0 0 0}]{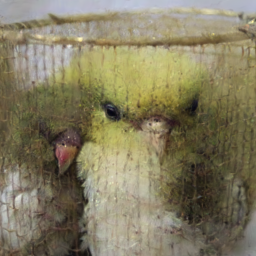} &
        \adjincludegraphics[clip,width=0.115\linewidth,trim={0 0 0 0}]{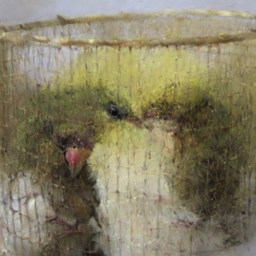} &
        \adjincludegraphics[clip,width=0.115\linewidth,trim={0 0 0 0}]{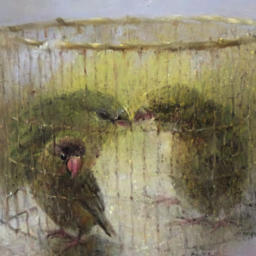} &
        \adjincludegraphics[clip,width=0.115\linewidth,trim={0 0 0 0}]{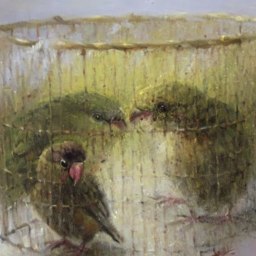}\\

        \raisebox{0.02\linewidth}{\rotatebox{90}{\small + SteerX}} 
        \adjincludegraphics[clip,width=0.115\linewidth,trim={0 0 0 0}]{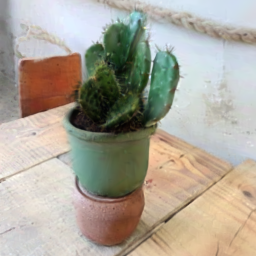} &
        \adjincludegraphics[clip,width=0.115\linewidth,trim={0 0 0 0}]{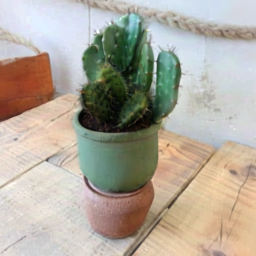} &
        \adjincludegraphics[clip,width=0.115\linewidth,trim={0 0 0 0}]{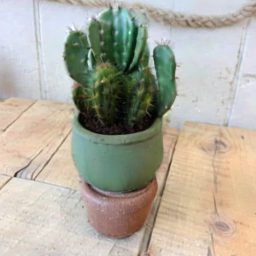} &
        \adjincludegraphics[clip,width=0.115\linewidth,trim={0 0 0 0}]{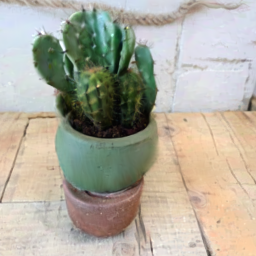} &
        \adjincludegraphics[clip,width=0.115\linewidth,trim={0 0 0 0}]{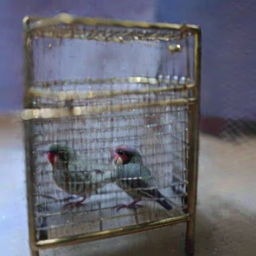} &
        \adjincludegraphics[clip,width=0.115\linewidth,trim={0 0 0 0}]{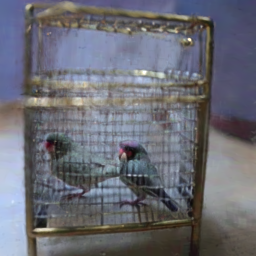} &
        \adjincludegraphics[clip,width=0.115\linewidth,trim={0 0 0 0}]{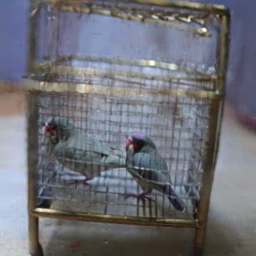} &
        \adjincludegraphics[clip,width=0.115\linewidth,trim={0 0 0 0}]{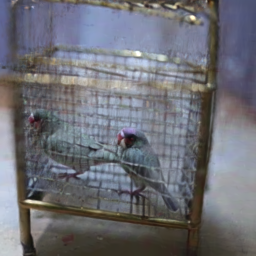}\\
        
        \multicolumn{4}{p{0.47\linewidth}}{\centering \textit{A green cactus in a clay pot.}} & \multicolumn{4}{p{0.47\linewidth}}{\centering \textit{A pair of lovebirds in a golden cage.}} \\
    \end{tabular}
    \vspace{-1mm}
    \vspace{\abovefigcapmargin}
    \caption{\textbf{Qualitative results of rendered images in T3Bench~\cite{he2023t3bench}.} SteerX significantly enhances the visual quality of rendered images and textual alignment, demonstrating its compatibility with various video generation and scene reconstruction models.}
    \vspace{\belowfigcapmargin}
    \label{fig:splatflow}
\end{figure*}

\begin{figure}[t]
    \centering
    \includegraphics[width=0.95\linewidth]{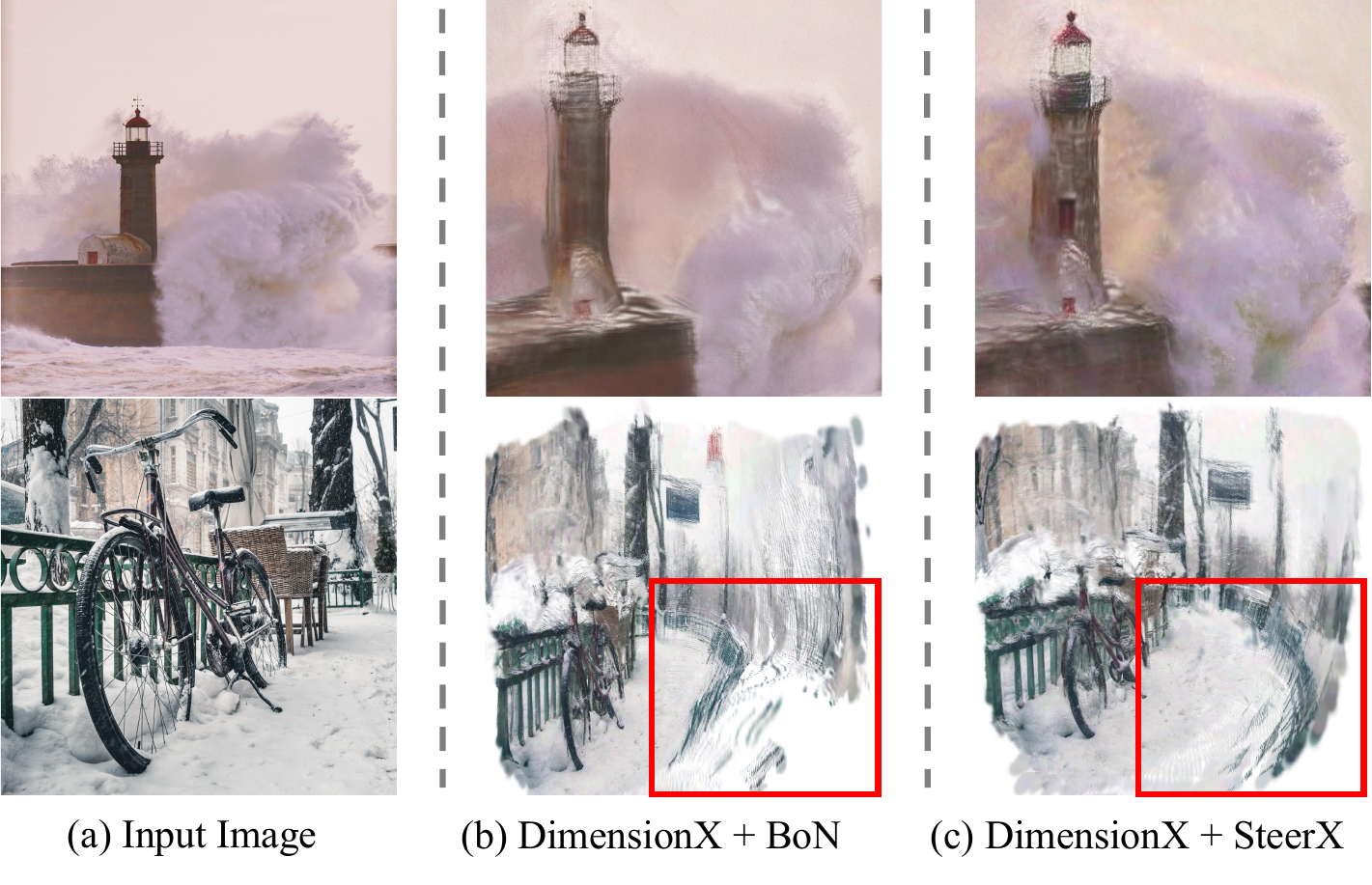}
    \vspace{-0.35cm}
    \caption{\textbf{Qualitative results in of 3DGS in VBench-I2V~\cite{huang2024vbench}.}}
    \vspace{-3mm}
    \label{fig:dimensionx}
\end{figure}

\vspace{\aboveeqmargin}
\paragraph{Image-to-3D Generation.}
We verify the effectiveness of SteerX in the Image-to-3D scene generation. While multi-view images generated by SteerX closely resemble those from the best-of-N approach, SteerX reconstructs 3D scenes with finer details and fewer discontinuities in novel views, as shown in~\cref{fig:dimensionx}. Furthermore, quantitative results in~\Cref{tab:vbench_3d} demonstrate that SteerX achieves better visual quality and geometric consistency in both rendered images and generated images compared to other baselines.

\begin{table}[t]
    \centering
    \setlength\tabcolsep{4pt}
    \resizebox{\linewidth}{!}{
    \begin{tabular}{lcccc}
       \toprule
        Method & \, $k$ \, & BRISQUE$\downarrow$ & CLIPScore$\uparrow$ & GS-MEt3R$\uparrow$ \\
        \midrule
        SplatFlow~\cite{go2024splatflow} & 1 & 33.25  & \underline{29.66} & 0.727 \\
        \arrayrulecolor{gray}\midrule
        \; + BoN & 2 & 36.82 & 29.55 & 0.756 \\
        \rowcolor{gray!25} \; + \textbf{SteerX} (early) & 2 & \underline{27.47} & 29.56 & 0.767 \\
        \midrule
        \; + BoN & 4 & 28.14 & 29.28 & \underline{0.768} \\
        \rowcolor{gray!25} \; + \textbf{SteerX} (early) & 4 & \textbf{26.65} & \textbf{29.73} & \textbf{0.775} \\
       \arrayrulecolor{black}\bottomrule
    \end{tabular}
    }
    \vspace{-1mm}
    \vspace{\abovetabcapmargin}
    \caption{\textbf{Quantitative results of multi-view generation in Single-Object-with-Surrounding set of T3Bench~\cite{he2023t3bench}.}}
    \vspace{\belowtabcapmargin}
    \vspace{-1mm}
    \label{tab:t3bench}
\end{table}

\begin{table}[t]
    \centering
    \setlength\tabcolsep{7pt}
    \resizebox{\linewidth}{!}{
    \begin{tabular}{lccc}
       \toprule
        Method   & BRISQUE$\downarrow$ & NIQE$\downarrow$ & CLIPScore$\uparrow$ \\
       \midrule
       DreamFusion~\cite{poole2022dreamfusion} & 90.2 & 10.48 & -   \\
       Magic3D~\cite{lin2023magic3d} & 92.8 & 11.20 & -   \\
       LatentNeRF~\cite{metzer2023latent} & 88.6 & 9.19 & - \\
       SJC~\cite{wang2023score} & 82.0 & 10.15 & - \\
       Fantasia3D~\cite{chen2023fantasia3d} & 69.6 & 7.65 & - \\
       ProlificDreamer~\cite{wang2024prolificdreamer} & 61.5 & 7.07 & - \\
       Director3D~\cite{li2025director3d} & 32.3 & 4.35 & 32.9 \\
       SplatFlow~\cite{go2024splatflow} & 19.6 & \textbf{4.24} & \underline{33.2} \\
       \arrayrulecolor{gray}\midrule
       SplatFlow~\cite{go2024splatflow}$^\dagger$ & 23.4 & 4.84 & 32.7 \\
       \midrule
       \; + BoN ($k=4$)& \underline{17.2} & 4.41 & 32.3 \\
       \rowcolor{gray!25} \; + \textbf{SteerX} ($k=4$) & \textbf{13.1} & \underline{4.30} & \textbf{33.4} \\
       \arrayrulecolor{black}\bottomrule
    \end{tabular}
    }
    \vspace{-1mm}
    \vspace{\abovetabcapmargin}
    \caption{\textbf{Quantitative results in T3Bench~\cite{he2023t3bench}.} SplatFlow$^\dagger$ is re-implemented results without the stop ray strategy, which is incompatible with our geometric steering pipeline.}
    \vspace{\belowtabcapmargin}
    \label{tab:t3bench_render}
\end{table}

\vspace{\aboveeqmargin}
\paragraph{Text-to-3D Generation.}
Notice that SteerX can be applied to any generative model and scene reconstruction model, including the multi-view rectified flow model and the feed-forward 3DGS decoder in SplatFlow~\cite{go2024splatflow}, enabling geometric steering through GS-MEt3R rewards. \Cref{tab:t3bench} presents quantitative results of generated multi-view images, demonstrating that SteerX effectively enhances overall performance and further improves scalability as the number of particles increases. Furthermore, as shown in~\Cref{tab:t3bench_render}, these 3D-aligned multi-view images are integrated with existing refinement processes, achieving state-of-the-art performance in 3D scene generation. \Cref{fig:splatflow} presents qualitative results, demonstrating that SteerX produces more text-aligned 3D scenes and visually realistic rendered images.

\section{Conclusion}
\label{sec:conclusion}

In this paper, we have introduced SteerX, a zero-shot inference-time steering method for camera-free 3D and 4D scene generation. Instead of addressing physical alignment separately in either video generation or scene reconstruction, SteerX unifies both stages and iteratively tilts the data distribution toward geometrically consistent samples. Extensive experiments across diverse scene generation tasks verify that SteerX effectively enhances visual quality, textual alignment, and geometric consistency. SteerX is practical, enabling zero-shot 3D and 4D scene generation, and is scalable, with geometric alignment improving as the number of particles increases. We believe this efficient scaling property holds great potential, opening new avenues for 3D and 4D scene generation, particularly in test-time scaling.

{
    \small
    \bibliographystyle{ieeenat_fullname}
    \bibliography{main}
}

\maketitlesupplementary
\appendix

\section{Proofs}
\label{sec:proof}

\convergence*
\begin{proof}
    From the conditions, the unnormalized weight assigned to a complete path is
    \begin{align}
        W_{\x_{T:0}} &= \prod_{t=1}^T \left[
        (1 + \epsilon_t(\x_{T:t}))G_t(\x_{T:t})
        \right]G_0(\x_{T:0}) \\
        &= \exp (\lambda r_\phi(\hat\x_0))\prod_{t=1}^T (1 + \epsilon_t(\x_{T:t})),
        \label{eq:definition_W}
    \end{align}
    where for the second equality, we used
    \begin{align}
        G_0(\x_{T:0}) \prod_{t=1}^T G_t(\x_{T:t}) = \exp (\lambda r_\phi(\hat\x_0)).
    \end{align}
    Let $r_\phi(\hat\x_0) = r_\phi(\x_0) + \mathbf{\delta}(\x_0)$. We have
    \begin{align}
        \exp (\lambda r_\phi(\hat\x_0)) = \exp (\lambda r_\phi(\x_0)) \exp (\lambda \mathbf{\delta}(\x_0)).
    \end{align}
    Given $|\mathbf{\delta}(\x_0)| \leq \eta$, we use the Taylor expansion
    \begin{align}
        \exp (\lambda \mathbf{\delta}(\x_0)) = 1 + \mathcal{O}(\lambda \eta).
    \label{eq:error_eta}
    \end{align}
    Further, we have that
    \begin{align}
        \prod_{t=1}^T (1 + \epsilon_t(\x_{T:t})) = 1 + \mathcal{O}(T\varepsilon).
    \label{eq:error_epsilon}
    \end{align}
    Combining \eqref{eq:definition_W},\eqref{eq:error_eta}, and \eqref{eq:error_epsilon}, the full weight reads
    \begin{align}
        W(\x_{T:0}) = \exp (\lambda r_\phi(\x_0)) (1 + \mathcal{O}(T\varepsilon + \lambda\eta)).
    \end{align}
    Integrating out the latent variables $\x_{T:1}$, the proof is complete.
\end{proof}

\begin{figure}[t]
\vspace{-1em}
\begin{algorithm}[H]
    \textbf{Required:} rectified flow model $\mathbf{v}_\theta$, reward function $r_\phi$, number of particles $k$, and initial noise $\{\mathbf{x}^{j}_{t_N}\}^{k}_{j=1} \sim \mathcal{N}(0, I)$.
    \caption{SteerX (rectified flow)}\label{alg:geo_rf_steering} 
    \textbf{Sampling:}
    \begin{algorithmic}[1]
        \For{$i \in \{N - 1, \dotsc, 0\}$}
            \vspace{1mm}
            \For{$j \in \{1 \dotsc k\}$}
                \vspace{1mm}
                \State $\hat{\mathbf{x}}^{j}_{t_0} \gets \mathbf{x}^{j}_{t_{i+1}} - t_{i+1} \mathbf{v}_\theta(\mathbf{x}^{j}_{t_{i+1}})$
                \vspace{1mm}
                \State $\mathbf{s}^{j}_{t_i} \gets r_\phi(\hat{\mathbf{x}}^{j}_{t_0}) \hfill \triangleright \textit{Intermediate rewards}$
                \vspace{1mm}
                \State ${G}^{j}_{t_i} \gets \exp(\lambda\max^{t_N}_{l=t_i}(\mathbf{s}^{j}_{l})) \hfill \triangleright \textit{Potential}$
                \vspace{1mm}
            \EndFor
            \vspace{1mm}
            \State $\{\hat{\mathbf{x}}^j_{t_0}\}^k_{j=1} \sim \text{Multinomial}(\{\hat{\mathbf{x}}^j_{t_0}, G^j_{t_i}\}^k_{j=1}$)  
            \vspace{1mm}
            \State $\bm{z} \sim \mathcal{N}(0, I)$
            \vspace{1mm}
            \State $\mathbf{x}^{j}_{t_i} \gets (1 - t_i)\{\hat{\mathbf{x}}^j_{t_0}\}^k_{j=1} +t_i \bm{z} $
        \vspace{1mm}
        \EndFor
        \vspace{1mm}
        \State $l \gets \argmax_{i \in \{1, \dotsc, k\}} r_\phi(\mathbf{x}^i_{t_0})$
        \State \textbf{return} $\mathbf{x}^l_{t_0}$
    \end{algorithmic}
\end{algorithm}
\vspace{-2.2em}
\end{figure}

\section{Geometric steering on rectified flow models}

Rectified flow-based video generative models~\cite{genmo2024mochi, kong2024hunyuanvideo, go2024splatflow} follow a straight Ordinary Differential Equation path, making it challenging to apply geometric steering since resampling particles does not introduce diverse sampling trajectories. Therefore, to introduce a stochastic process into the generation process, we provide additional modifications to adapt geometric steering for rectified flow models, as shown in~\cref{alg:geo_rf_steering}. The process of computing intermediate rewards and potentials remains the same as before. However, instead of resampling new particles from the existing particles, we resample the expected $\hat{x}_{t_0}$ from the multinomial distribution. Then, project the resampled particles onto a valid manifold at each noise level. This approach effectively enables geometric steering in rectified flow models and ensures that the model explores diverse trajectories.

\section{Additional Results}

We present additional experiments and results to further validate the scalability and effectiveness of SteerX. In Section~\ref{subsec:scalable}, we explore how increasing the number of particles or extending video length impacts geometric steering, providing insights into the scaling properties of SteerX. We show qualitative comparisons for Text-to-4D generation in Section~\ref{subsec:t4d}, and additional qualitative results for both Text-to-4D and Image-to-3D scene generation in Section~\ref{subsec:qual_results}.

\begin{figure}[t]
    \centering
    \includegraphics[width=\linewidth]{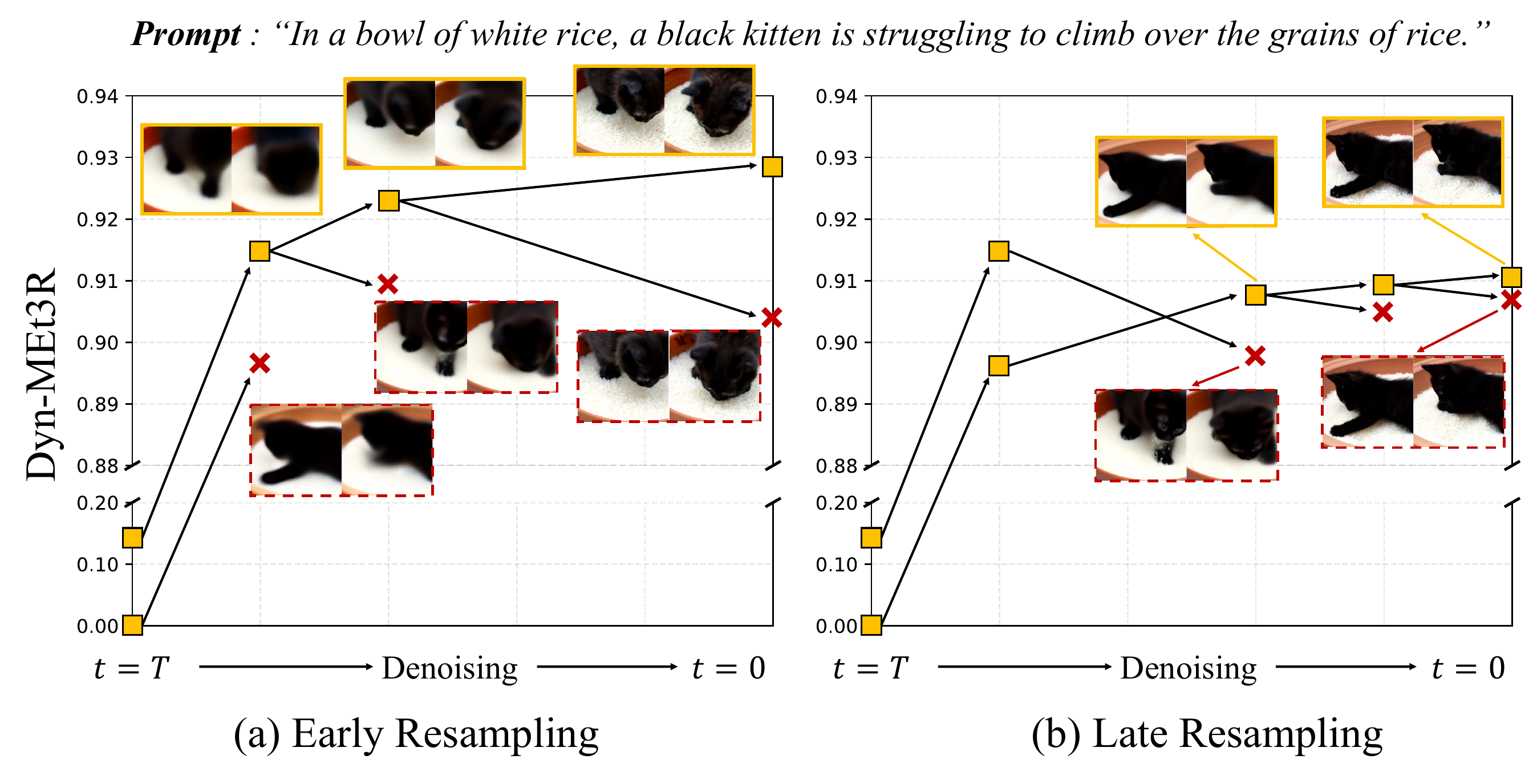}
    \vspace{-0.7cm}
    \caption{\textbf{Resampling analysis} for $k=2, M=2$ in Text-to-4D.}
    \vspace{-0.1cm}
    \label{fig:resampling}
\end{figure}

\begin{table}[t]
    \centering
    \setlength\tabcolsep{6pt}
    \resizebox{\linewidth}{!}{
    \begin{tabular}{lccccc}
       \toprule
        Method & \, $k$ \, & Aesthetic$\uparrow$  & Temporal$\uparrow$ & Dynamic$\uparrow$ & Dyn-MEt3R$\uparrow$ \\
       \midrule
        Mochi~\cite{genmo2024mochi} & 1 & 0.491 & 0.243 & - & 0.884 \\
        \rowcolor{gray!25} \; + \textbf{SteerX}  & 4 & \underline{0.500} & \underline{0.248} & - & \underline{0.929}\\
        \rowcolor{gray!25} \; + \textbf{SteerX}  & 8  & \textbf{0.526} & \textbf{0.251} & -& \textbf{0.945}\\
        \arrayrulecolor{gray}\midrule
        HunyuanVideo~\cite{kong2024hunyuanvideo} & 1  & 0.549 & 0.241 & - & 0.911 \\
        \rowcolor{gray!25} \; + \textbf{SteerX}  & 4  & \underline{0.555} & \underline{0.243} & -& \underline{0.964}\\
        \rowcolor{gray!25} \; + \textbf{SteerX}  & 8  & \textbf{0.570} & \textbf{0.244} & - & \textbf{0.979}\\
        \midrule
        CogVideoX~\cite{yang2024cogvideox} & 1  & 0.592 & - & 0.158 & 0.880 \\
        \rowcolor{gray!25} \; + \textbf{SteerX}  & 4 & \underline{0.596} & - & \underline{0.170} &\underline{0.909}\\
        \rowcolor{gray!25} \; + \textbf{SteerX}  & 8 & \textbf{0.600} & - & \textbf{0.172} & \textbf{0.930}\\
       \arrayrulecolor{black}\bottomrule
    \end{tabular}
    }
    \vspace{-1mm}
    \vspace{\abovetabcapmargin}
    \caption{\textbf{Ablation study on the number of particles.}}
    \label{tab:scalable_particle}
\end{table}

\begin{table}[t]
    \centering
    \setlength\tabcolsep{8pt}
    \resizebox{\linewidth}{!}{
    \begin{tabular}{lcccc}
       \toprule
        Method & \, $k$ \, & $N$ & Temporal$\uparrow$ & Dyn-MEt3R$\uparrow$ \\
       \midrule
        HunyuanVideo~\cite{kong2024hunyuanvideo} & 1 & 25  & 0.241 & 0.911  \\
        HunyuanVideo~\cite{kong2024hunyuanvideo} & 1 & 49  & 0.245& 0.940 \\
        \arrayrulecolor{gray}\midrule
        \; + BoN & 4 & 25  & 0.239 & 0.931\\
        \; + BoN & 4 & 49  & \underline{0.246} & 0.948 \\
        \midrule
        \rowcolor{gray!25} \; + \textbf{SteerX}  & 4 & 25  & 0.243 & \underline{0.964} \\
        \rowcolor{gray!25} \; + \textbf{SteerX}  & 4 & 49  & \textbf{0.248} & \textbf{0.978} \\
       \arrayrulecolor{black}\bottomrule
    \end{tabular}
    }
    \vspace{-1mm}
    \vspace{\abovetabcapmargin}
    \caption{\textbf{Ablation study on the number of frames.}}
    \vspace{\belowtabcapmargin}
    \label{tab:scalable_frame}
\end{table}

\subsection{Analysis on design choices} Linear resampling places resampling steps at uniform intervals across the entire timestep $T$. Also, as shown in \cref{fig:resampling}, the generative model tends to form a coarse geometric structure around $0.8T$ and focuses on fine details after $0.6T$. Based on this observation, early and late resampling are uniformly scheduled between $0.8T$~–~$0.6T$ and $0.4T$~–~$0.2T$, respectively. Early resampling allows the model to build upon the coarse structure, refine local geometry, and gradually incorporate fine details by exploring diverse generation trajectories. In contrast, reward values tend to plateau in the later steps, indicating limited exploration at late resampling.

\begin{figure}[t]
    \centering
    \includegraphics[width=\linewidth]{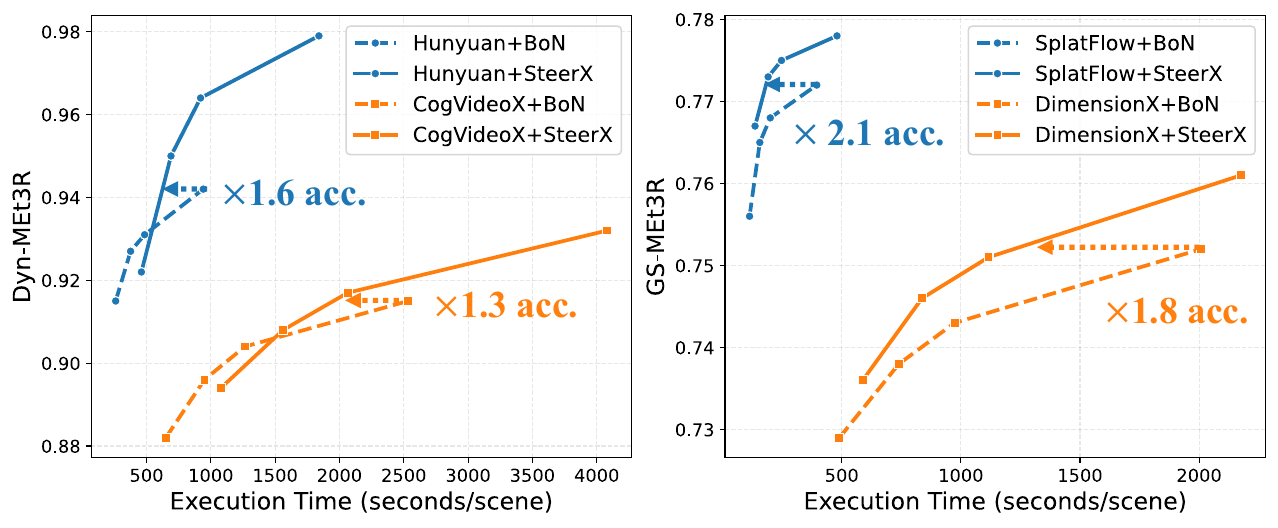}
    \vspace{-0.8cm}
    \caption{\textbf{Scalability analysis} with $k=2, 3, 4, 8$. We use 100 randomly selected samples in VBench-I2V for Image-to-3D/4D.}
    \vspace{\belowfigcapmargin}
    \vspace{-2.5mm}
    \label{fig:runtime}
\end{figure}

\subsection{Scalability of SteerX}
\label{subsec:scalable}

We further explore the scaling property of SteerX by increasing the number of particles $k$ and video length $N$. \Cref{fig:runtime} presents the execution time versus reward values for all generation tasks as the number of particles increases. Although SteerX incurs additional computational overhead by forwarding the scene reconstruction model multiple times, it demonstrates better inference-time scalability than BoN. Also, as the number of particles increases, SteerX achieves greater performance gains by exploring more diverse sampling trajectories, rather than relying on post-hoc selection. \Cref{tab:scalable_particle} presents quantitative results on the performance of 4D scene generation as the number of particles increases. We observe that Dyn-MEt3R remains highly correlated with other evaluation metrics, further demonstrating the robustness of SteerX's scalability.
Also, \cref{appfig:long} and \Cref{tab:scalable_frame} show the impact of extending video length on Text-to-4D scene generation. We observe that as video length increases, the generated videos become more dynamic and tend to be more object-centric. Compared to the best-of-N approach, SteerX generates more visually plausible and dynamic objects, effectively capturing camera motion.

\subsection{Additional comparisons in Text-to-4D}
\label{subsec:t4d}

We further present qualitative comparisons to demonstrate the effectiveness of SteerX in following the given camera descriptions, as shown in \Cref{fig:qual_comp_4d}. SteerX successfully aligns with both the specified camera trajectories and object motions, resulting in highly natural 4D scenes.

\begin{figure*}[t]
    \centering
    \setlength\tabcolsep{4pt}
    \begin{tabular}{cccc}
        \animategraphics[poster=00, width=0.23\linewidth, height=0.23\linewidth]{8}{rebuttal/image_to_4d_1/ezgif-frame-0}{00}{16} &
        \animategraphics[poster=00, width=0.23\linewidth, height=0.23\linewidth]{8}{rebuttal/image_to_4d_2/ezgif-frame-0}{00}{16} &
        \animategraphics[poster=00, width=0.23\linewidth, height=0.23\linewidth]{8}{rebuttal/text_to_4d_1/ezgif-frame-0}{00}{15} &
        \animategraphics[poster=00, width=0.23\linewidth, height=0.23\linewidth]{8}{rebuttal/text_to_4d_2/ezgif-frame-0}{00}{15}
    \end{tabular}
    \vspace{-0.25cm}
    \caption{\textbf{4D demo.} Please click each example in Acrobat Reader.}
    \vspace{-0.35cm}
    \label{fig:demo}
\end{figure*}

\subsection{Additional qualitative results}
\label{subsec:qual_results}

As shown in~\Cref{fig:qual_3d,fig:supp_text_to_4d,fig:supp_text_to_4d_2}, we provide additional qualitative results for Text-to-4D and Image-to-3D scene generation, demonstrating SteerX’s ability to generate diverse 3D and 4D scenes only from images or text prompts. We also provide video results in~\cref{fig:demo}.

\section{Limitations and Discussions}
While SteerX effectively enhances both visual quality and geometric alignment in 3D and 4D scene generation, it has certain limitations that present opportunities for future improvements. First, SteerX currently relies on existing feed-forward scene reconstruction models, meaning it cannot directly reconstruct 4D Gaussian Splats (4DGS). Second, video generative models for 4D scene generation struggle to produce video frames with large inter-frame camera motion, limiting the overall scene scale. Future advancements in video generation models that better handle broad camera motion ranges will further enhance SteerX's effectiveness in large-scale 4D scene generation.

\begin{figure*}[t!]
    \centering
    \setlength\tabcolsep{2pt}
    
    \begin{tabular}{c:ccc:ccc}
      
        \multirow{2}[2]{*}[0.04\linewidth]{%
            \parbox{0.16\linewidth}{%
                \centering
                \footnotesize
                \textit{The Golden Gate Bridge glows with a warm halo in the sunset's afterglow, standing majestically against the sea breeze with ships slowly passing beneath.}%
            }%
        }  &
        \adjincludegraphics[clip,width=0.13\linewidth,trim={0 0 0 0}]{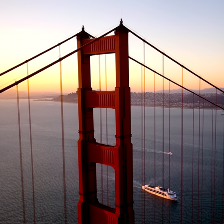} &
        \adjincludegraphics[clip,width=0.13\linewidth,trim={0 0 0 0}]{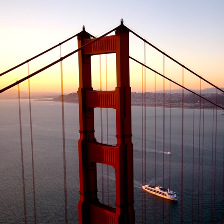} &
        \adjincludegraphics[clip,width=0.13\linewidth,trim={0 0 0 0}]{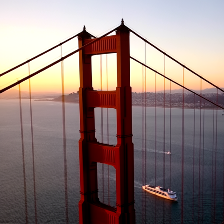} &
        \adjincludegraphics[clip,width=0.13\linewidth,trim={0 0 0 0}]{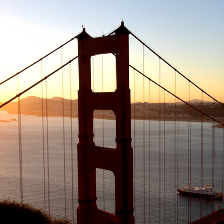} &
        \adjincludegraphics[clip,width=0.13\linewidth,trim={0 0 0 0}]{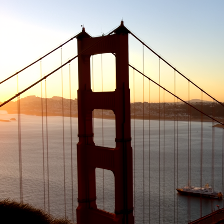} &
        \adjincludegraphics[clip,width=0.13\linewidth,trim={0 0 0 0}]{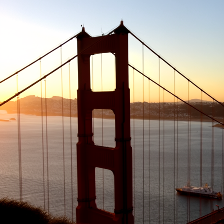} \\

        \arrayrulecolor{gray}\cmidrule(lr){2-7}
        
         &
        \adjincludegraphics[clip,width=0.13\linewidth,trim={0 0 0 0}]{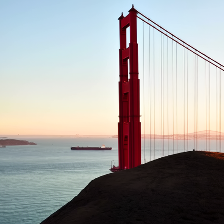} &
        \adjincludegraphics[clip,width=0.13\linewidth,trim={0 0 0 0}]{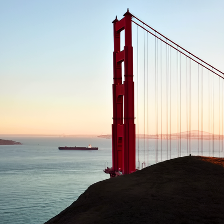} &
        \adjincludegraphics[clip,width=0.13\linewidth,trim={0 0 0 0}]{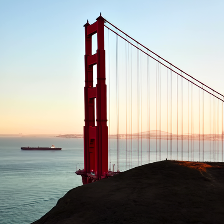} &
        \adjincludegraphics[clip,width=0.13\linewidth,trim={0 0 0 0}]{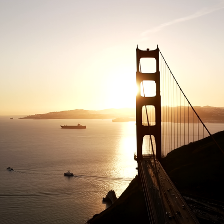} &
        \adjincludegraphics[clip,width=0.13\linewidth,trim={0 0 0 0}]{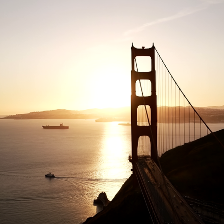} &
        \adjincludegraphics[clip,width=0.13\linewidth,trim={0 0 0 0}]{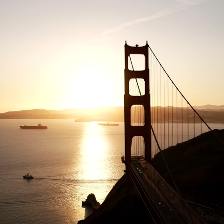} \\

        \multicolumn{1}{c}{\small Text} & \multicolumn{3}{c}{\small Hunyuan + BoN} & \multicolumn{3}{c}{\small Hunyuan + SteerX}
    \end{tabular}
    \vspace{\abovefigcapmargin}
    \caption{\textbf{Qualtitave ablation on video length.} We use four particles and visualize frames with $N=25$ (top) and $N=49$ (bottom).}
    \vspace{\belowfigcapmargin}
    \label{appfig:long}
\end{figure*}

\begin{figure*}[t]
    \centering
    \includegraphics[width=\linewidth]{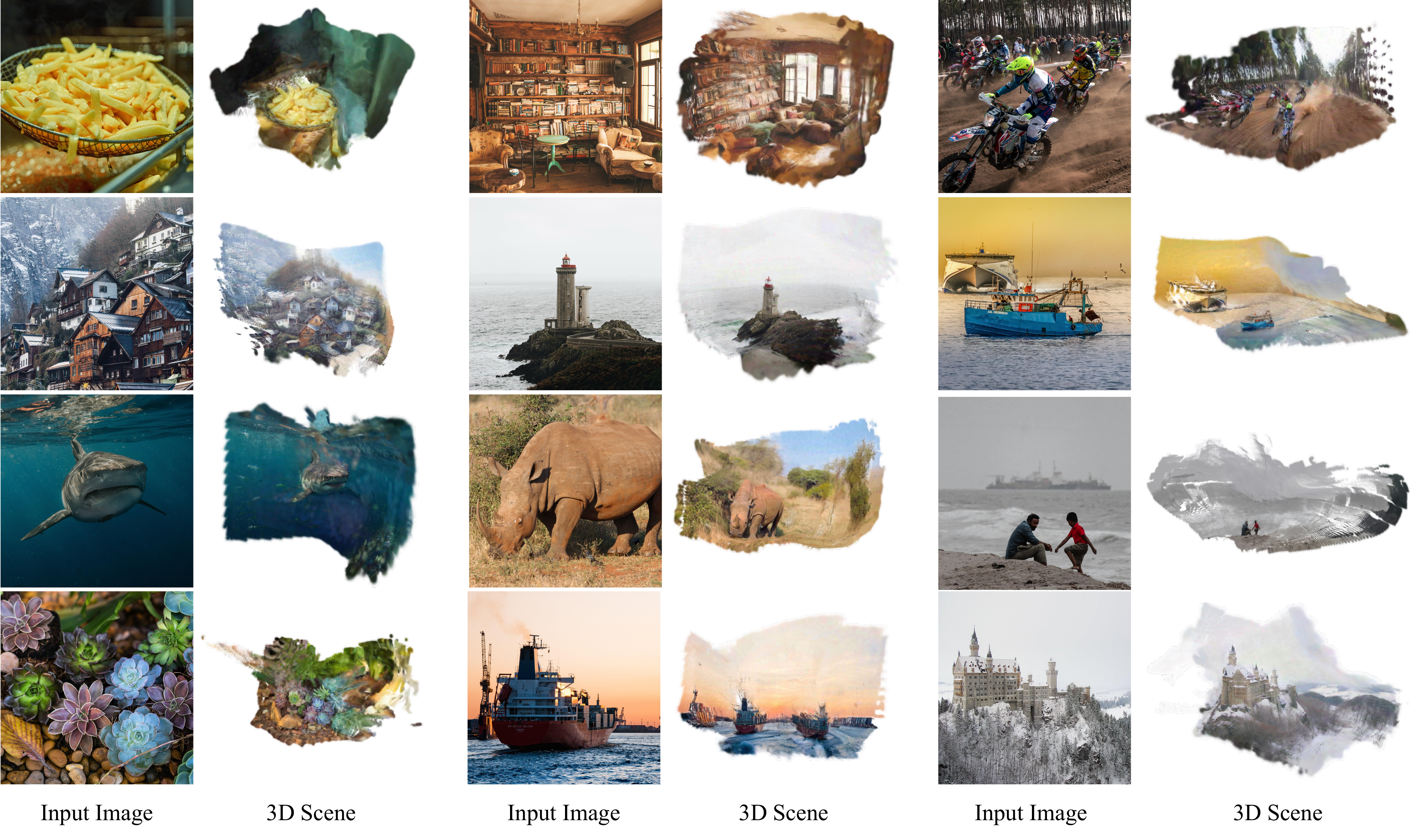}
    \caption{\textbf{Additional qualitative results in Image-to-3D.}}
    \vspace{\belowfigcapmargin}
    \label{fig:qual_3d}
\end{figure*}

\begin{figure*}[t]
    \centering
    \includegraphics[width=\linewidth, height=0.9\textheight, keepaspectratio]{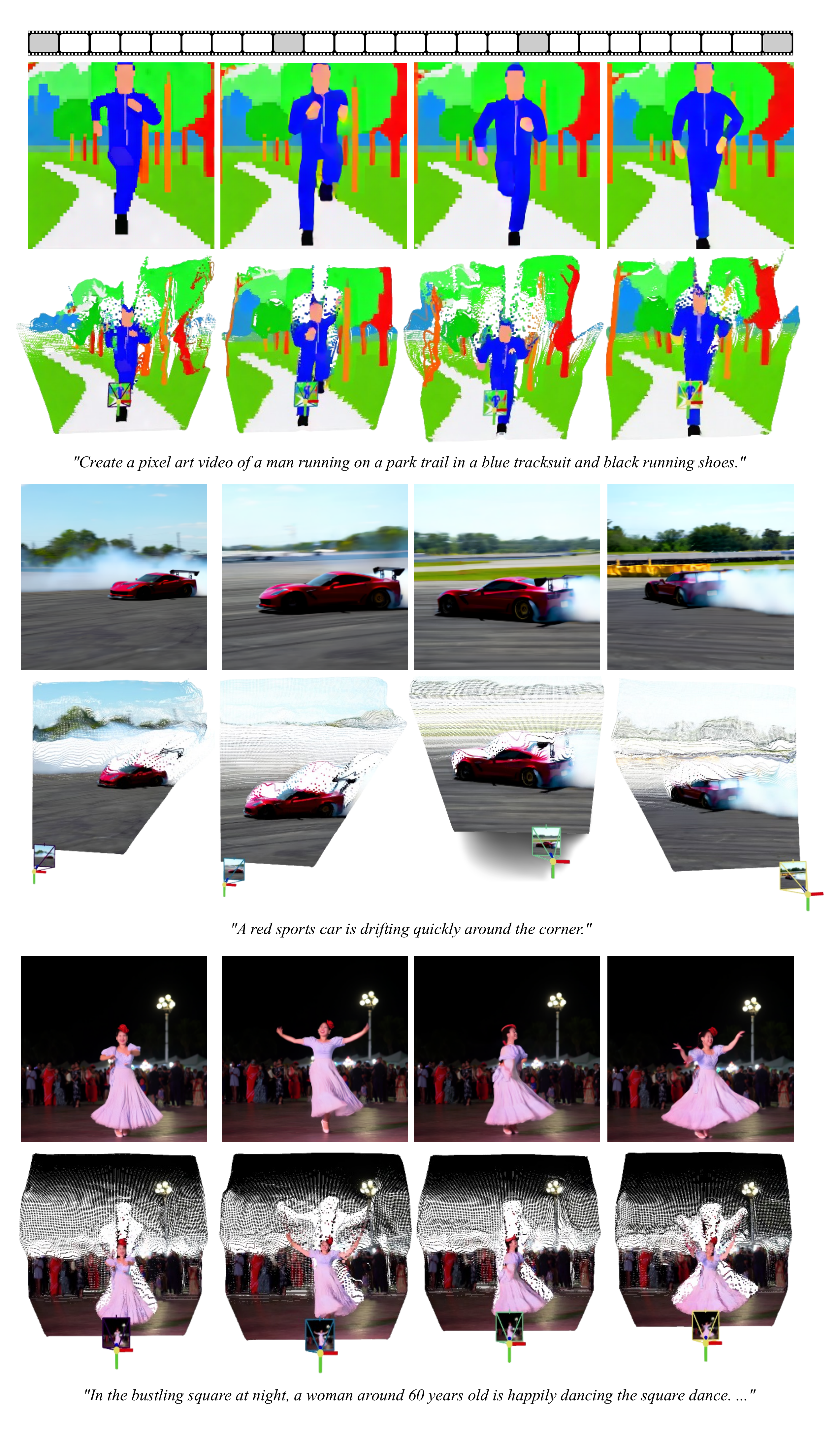}
    \caption{\textbf{Additional qualitative results in Text-to-4D.}}
    \vspace{\belowfigcapmargin}
    \label{fig:supp_text_to_4d}
\end{figure*}

\begin{figure*}[t]
    \centering
    \includegraphics[width=\linewidth, height=0.9\textheight, keepaspectratio]{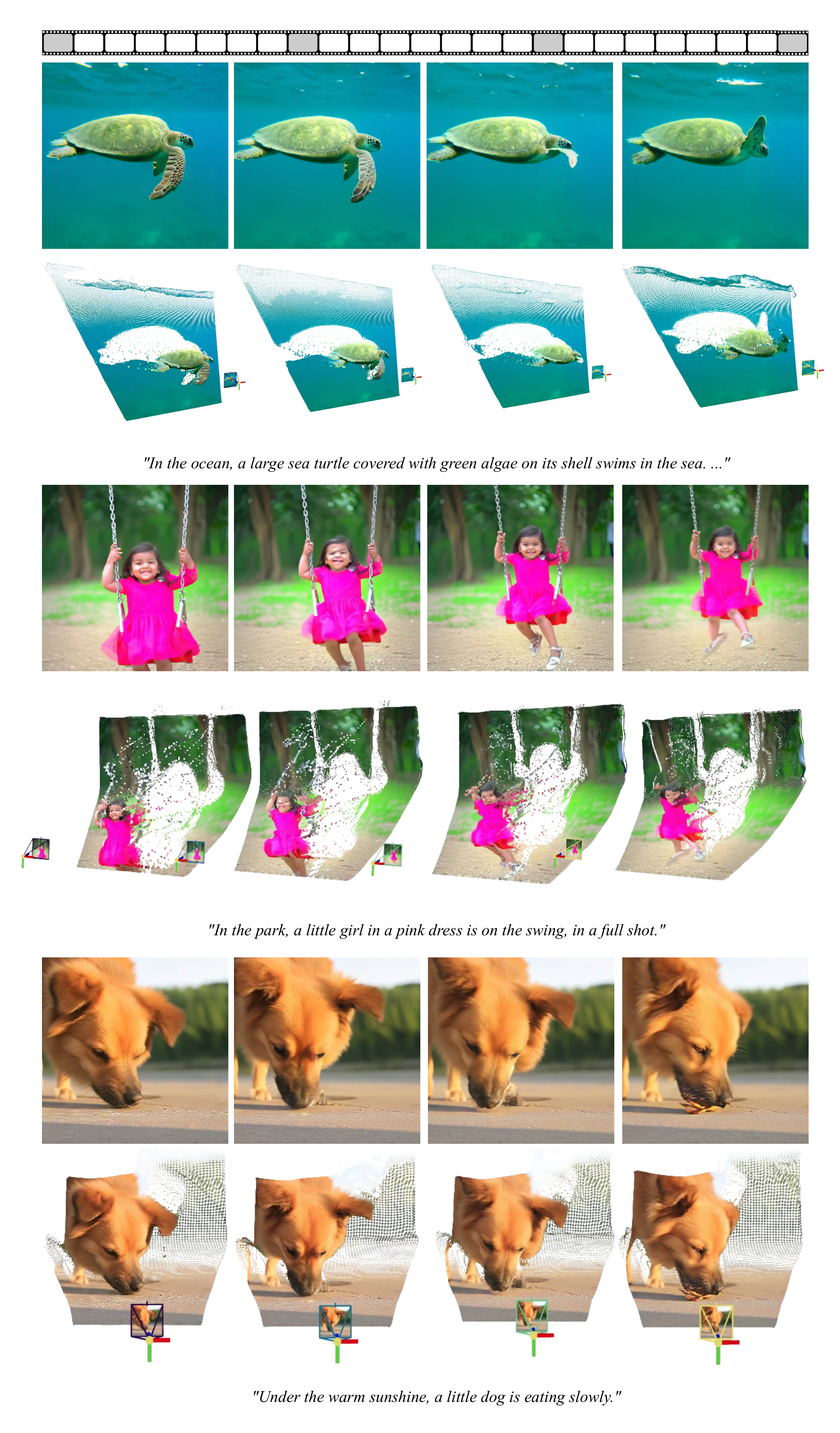}
    \caption{\textbf{Additional qualitative results in Text-to-4D.}}
    \vspace{\belowfigcapmargin}
    \label{fig:supp_text_to_4d_2}
\end{figure*}

\begin{figure*}[t]
    \centering
    \includegraphics[width=0.7\linewidth]{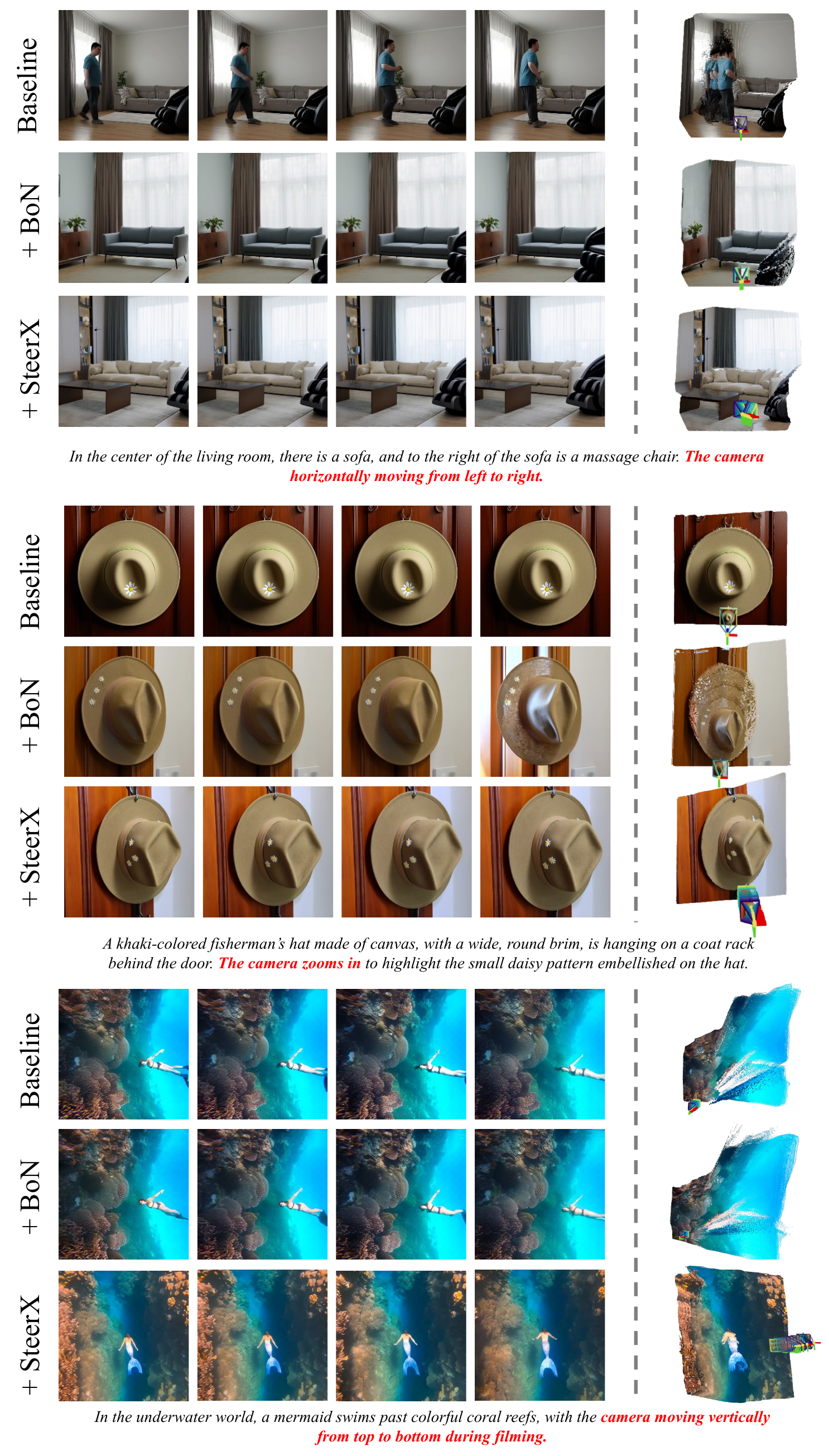}
    \caption{\textbf{Qualitative comparisons on Text-to-4D.}}
    \vspace{\belowfigcapmargin}
    \label{fig:qual_comp_4d}
\end{figure*}

\end{document}